%

\let\clineorig\cline






 
\documentclass[default]{sn-jnl}


\usepackage{graphicx}%
\usepackage{multirow}%
\usepackage{amsmath,amssymb,amsfonts}%
\usepackage{amsthm}%
\usepackage{mathrsfs}%
\usepackage[title]{appendix}%
\usepackage{xcolor}%
\usepackage{textcomp}%
\usepackage{manyfoot}%
\usepackage{booktabs}%
\usepackage{algorithm}%
\usepackage{algorithmicx}%
\usepackage{algpseudocode}%
\usepackage{listings}%
\usepackage{natbib}%



\theoremstyle{thmstyleone}%
\newtheorem{theorem}{Theorem}
%

\theoremstyle{thmstyletwo}%
\newtheorem{example}{Example}%

\theoremstyle{thmstylethree}%

\input{Definitions}
\usepackage{outlines}
\usepackage{amsmath,amssymb,mathtools}
\usepackage{url}
\usepackage{booktabs}
\usepackage{algorithm}
\usepackage{algpseudocodex}
\usepackage{mathtools}
\usepackage{lineno}
\usepackage{multirow} 
\usepackage{tabularx}
\usepackage{tikz}
\usetikzlibrary{arrows.meta, positioning}
\usepackage{rotating}
\usepackage{enumitem} 
\usepackage{hyperref}
\hypersetup{colorlinks=false, pdfborder={0 0 0}}

\newtheorem{thm}{Theorem}
\newtheorem{prop}{Proposition}

\newcommand{\Perp}{\mathop{\!\perp\!\!\!\perp\!} }

\newcommand{\ctext}[1]{\raise0.2ex\hbox{\textcircled{\scriptsize{#1}}}}
\newcommand{\deffunc}{\mathbb{I}}

\raggedbottom

\begin{document}


\title[Detection of Unobserved Common Causes based on NML Code in Discrete, Mixed, and Continuous Variables]{Detection of Unobserved Common Causes based on NML Code in Discrete, Mixed, and Continuous Variables}


\author*[1]{\fnm{Masatoshi} \sur{Kobayashi}}\email{kobayashi-masatoshi453@g.ecc.u-tokyo.ac.jp}

\author[2]{\fnm{Kohei} \sur{Miyaguchi}}\email{miyaguchi@ibm.com}

\author[1]{\fnm{Shin} \sur{Matsushima}}\email{https://orcid.org/0000-0002-8160-4310}

\affil[1]{
    \orgdiv{Graduate School of Information Science and Technology}, 
    \orgname{The University of Tokyo}, 
    \orgaddress{\street{7-3-1 Hongo, Bunkyo-ku}, \city{Tokyo}, \postcode{113-8656}, \country{Japan}}
}

\affil[2]{
    \orgdiv{IBM Research -- Tokyo}, 
    \orgaddress{\city{Tokyo}, \country{Japan}}
}


\abstract{Causal discovery in the presence of unobserved common causes from observational data only is a crucial but challenging problem. 
We categorize all possible causal relationships between two random variables into the following four categories and aim to identify one from observed data: two cases in which either of the direct causality exists,
a case that variables are independent, and a case that variables are confounded by latent confounders. 
Although existing methods have been proposed to tackle this problem, they require unobserved variables to satisfy assumptions on the form of their equation models. 
In our previous study~\cite{kobayashi2022detection}, the first causal discovery method without such assumptions is proposed for discrete data and named  \textsf{CLOUD}. 
Using Normalized Maximum Likelihood (NML) Code, \textsf{CLOUD} selects a model that yields the minimum codelength of the observed data from a set of model candidates.
This paper extends \textsf{CLOUD} to apply for various data types across discrete, mixed, and continuous. We not only performed theoretical analysis to show the consistency of \textsf{CLOUD} in terms of the model selection, but also demonstrated that \textsf{CLOUD} is more effective than existing methods in inferring causal relationships by extensive experiments on both synthetic and real-world data. }

\keywords{Causal Discovery, Unobserved Common Causes, Discrete, Mixed, Continuous Data, SCM, MDL Principle, Model Selection, NML Code}



\maketitle

\section{Introduction}\label{sec1}
Intelligent systems that utilize accurate prediction based on data has enjoyed remarkable success by the development of machine learning methodology. 
It is expected that an accurate predictor learned from data possesses a certain form of information on the data. 
This expectation motivates us to render information extracted from data by a learning algorithm into a form in which humans can understand.
For this sake, it is considered that the study of causal inference, aiming at extracting the underlying causal mechanism, has gained prominence in the context of machine learning as well as other various fields \cite{peters2017elements, scholkopf2022causality}.

As it is impractical to perform randomized control trials in many cases, many studies have focus on inferring causal structures based soly on observational data \cite{pearl2009causality}. In the context of classical causal discovery, the decision whether $X$ is a direct cause of $Y$ or $Y$ is a direct cause of $X$ is already a hard problem so that standard methods assume there is no unknown common cause (causal sufficiency).
However, in practice, this assumption is often violated, which can results in the methods producing unreliable results. Therefore, causal discovery that allows for the presence of unobserved common causes becomes crucial.


We consider a new problem setting by revisiting Reichenbach's common cause principle quoted as follows~\cite[Principle 1.1]{peters2017elements}:

\begin{quote}
    If two random variables $X$ and $Y$ are statistically dependent,
    then there exists a third variable $C$ that causally influences both.
    (As a special case, $C$ may coincide with either $X$ or $Y$.)
    Furthermore, this variable $C$ screens $X$ and $Y$ from each other in the sense that given $C$, they become independent.   
\end{quote}
As a logical conclusion of the statement above, we can always categorize the relationship between $X$ and $Y$ into 4 cases:
1) $X$ causally influences ($X$ and) $Y$, 2) $Y$ causally influences $X$ (and $Y$), 3) there exists a third variable $C$ that causally influences $X$ and $Y$, and 4) $X$ and $Y$ are statistically independent.
We call a problem to decide which among those cases from data \emph{Reichenbach problem} and deal with it.
The solution for this problem is advantageous over the traditional methods in the sense that it does not require the prior knowledge on the nonexistence of unobserved confounder and then it is widely applicable.

There has been many existing studies on the detection of unobserved common cause as well. As we discuss in later sections, those methods require a type of assumptions such that unobserved confounding variables and the observed variables can be described by a specific formulation.
However, we consider that such assumptions on unobserved variables are hard to guarantee, even if the domain knowledge on the dataset is available. Therefore, we aim to deal with the Reichenbach problem without assuming such a specificity on the possible relations between unobserved confounder and observed variables. 

In our previous study \cite{kobayashi2022detection}, we proposed \textsf{CLOUD} (CodeLength-based methOd for Unobserved common causes between Discrete data) to address the Reichenbach problem for discrete data. In this paper, we extend \textsf{CLOUD} to accomodate all types of data including discrete, mixed, and continuous. Unlike all existing methods, \textsf{CLOUD} does not specify a form of unobserved confounders.
We take a strategy in which we select a causal model which yields the minimum codelength among all candidates, known as the minimum description length (MDL) principle \cite{rissanen1978modeling}.
The key for our method is to employ normalized maximum likelihood (NML) code to compute the codelength in models of different capacities. We show this method exhibits high performance both in theoretically and experimentally.

The rest of this paper is organized as follows. The next section introduces the MDL principle and Structural Causal Models (SCMs), and then defines the Reichenbach problem formally.
In section~\ref{sec3}, we review existing methods from the viewpoint of two approaches and then see how it is hard to deal with the Reichenbach problem without assumptions in both approaches.
In section~\ref{sec4}, we describe models for which we consider the NML code and proposed method.
In section~\ref{sec5} and section~\ref{sec6}, we conduct theoretical analysis and extensive experiments, respectively. Finally, the conclusion is given in section~\ref{sec7}.

\section{Preliminaries}\label{sec2}

We introduce three theoretical frameworks upon which we construct our method for causal discovery, namely the minimum description length (MDL) principle, Structural Causal Models (SCMs) and the Reichenbach problem.

\subsection{The MDL principle and the NML Codelength}

The Minimum Description Length (MDL) principle is a model selection principle grounded in the concept of data compression. It asserts that the optimal model is the one that most succinctly describes both the data $x^n$ and the model $M$, where $x^n=(x_i)_{i=1, \ldots, n}$ is a data sequence of length $n$. 
In this principle, we compute the codelength of data with a universal code for each model, as an information criteria. 

In this context, we introduce the Normalized Maximum Likelihood (NML) code as a universal code.
The NML code is justified by the fact that, it is optimal in terms of the minimax regret criterion \cite{rissanen2012optimal} and that it exhibits consistency in model selection \cite{rissanen1989stochastic}.
The NML codelength, also known as stochastic complexity, is derived from the NML distribution. 
The NML distribution for statistical model $M$ with respect to data $x^n$ is defined as follows:
\begin{equation}
P_{\mathrm{NML}}\left(x^{n}; M\right)=\frac{P\left(x^{n} ; M, \hat{\bm{\theta}}(x^n)\right)}{\sum_{X^{n}}  P\left(X^{n} ; M, \hat{\bm{\theta}}(X^n)\right)},
\end{equation}
where $\hat{\bm{\theta}}(x^n)$ denotes the maximum likelihood estimator of the parameters of the model $M$ given the data $x^n$ and the summation $\sum_{X^{n}}$ is taken over the space of all the possible values of the data $X^n\in \mathcal{X}^n$.

The stochastic complexity, or the NML codelength, of $x^n$ is the negative logarithmic likelihood of the NML distribution:
\begin{align}
\mathcal{SC}(x^n; M) 
\coloneqq& -\log{P_{\mathrm{NML}}\left(x^{n}; M\right)} \notag \\
=& -\log P\left(x^{n} ; M,\hat{\bm{\theta}}\left(x^{n}\right)\right)+\log \sum\limits_{X^n} P\left(X^{n} ;M, \hat{\bm{\theta}}\left(X^{n}\right)\right)
\label{stochastic_complexity}
\end{align}

The first term of Eq. \eqref{stochastic_complexity} is the negative maximum log-likelihood, which is efficiently computable for many models. The second term, referred to as parametric complexity, is expressed as follows:
\begin{align}
\label{parametric_complexity}
    \log \mathcal{C}_n(M) \coloneqq \log \sum\limits_{X^n} P\left(X^{n} ;M, \hat{\bm{\theta}}\left(X^{n}\right)\right) 
\end{align}

The parametric complexity is not always analytically tractable, and its computation is one of the major focuses of the NML-based model selection. Techniques for its computation include deriving asymptotically consistent approximations \cite{rissanen2012optimal} and employing the g-function \cite{hirai2013efficient}.

As we see in below, the parametric complexity of a categorical distribution model with $K$ categories, denoted by $\log \mathcal{C}_{\mathsf{CAT}}(K, n)$, is represented by the following equation:
\begin{align*}
\label{multi_param_comp}
\mathcal{C}_{\mathsf{CAT}}(K, n) 
= &  \sum_{X^n\in \cbr{1,\ldots,K}^n}  \prod_{k=1}^{K}\left(\frac{n(X=k)}{n}\right)^{n(X=k)}.
\end{align*}
Here, $n(X=k)$ is the frequency of occurrence of value $k$ in the sequence $x^n$, defined as $n(X = k) = \sum_{i=1}^{n}\deffunc(x_{i} = k)$, where $\mathbb{I}$ is the indicator function. Kontkanen and Myllymäki developed an efficient recurrence formula for this model with a linear time complexity of $\mathcal{O}(n+K)$ \cite{kontkanen2007linear}:
\begin{align*}
    \mathcal{C}_{\mathsf{CAT}}(K=1, n) &= 1, \\ 
    \mathcal{C}_{\mathsf{CAT}}(K=2, n) &= \sum_{h_{1}+h_2=n} \frac{n !}{h_{1} ! h_{2} !} \left(\frac{h_{1}}{n}\right)^{h_{1}} \left(\frac{h_{2}}{n}\right)^{h_{2}}, \\
    \mathcal{C}_{\mathsf{CAT}}(K+2, n) &= \mathcal{C}_{\mathsf{CAT}}(K+1, n)+\frac{n}{K} \mathcal{C}_{\mathsf{CAT}}(K, n). 
\end{align*}

In dealing with continuous variables, the summation in Eq. \eqref{parametric_complexity} is replaced with an integral. 
Lastly, we illustrate how to compute the NML codelength for some statistical models. These examples cover data sequences of either discrete-type or continuous-type.
\begin{example}[The NML Codelength for a Discrete Data]
\label{ex:Example_Discrete} 
We compute the NML codelength for a data sequence $x^n$ under a categorical distribution model ${\mathsf{CAT}}^{m_X}$ with $m_X$ categories:
\begin{align*}
    \mathsf{CAT}^{m_X} = \left\lbrace P(X; \bm{\theta}) \ \middle| \ \bm{\theta} = (\theta_{0}, \dots, \theta_{m_X-1}), \theta_k \geq 0,  \sum_{k=0}^{m_X-1}\theta_{k}= 1 \right\rbrace.
\end{align*}
For given data $x^n$, the maximum likelihood estimator of parameter $\hat{\bm{\theta}} = (\hat{\theta}_{0}, \dots, \hat{\theta}_{m_X-1})$ is $\hat{\theta}_k(x^n)=\frac{n(X = k)}{n}$ for $k = 0, \cdots, m_X - 1$. Consequently, the first term in Eq. \eqref{stochastic_complexity} is computed as
\begin{align*}
-\log P(x^n; \mathsf{CAT}^{m_X}, \hat{\bm{\theta}}(x^n)) = -\sum_{k=0}^{m_X - 1}n(X=k) \log \frac{n(X=k)}{n}.
\end{align*}
The second term in Eq. \eqref{stochastic_complexity}, the parametric complexity of $\mathsf{CAT}^{m_X}$, is given by $\log \mathcal{C}_{\mathsf{CAT}}(K=m_X, n)$. Thus, the NML codelength for the discrete data $x^n$ under the $\mathsf{CAT}^{m_X}$ is given by
\begin{align}
\mathcal{SC}(x^n; \mathsf{CAT}^{m_X}) =  - \sum_{k=0}^{m_X - 1}n(X=k) \log \frac{n(X=k)}{n} + \log \mathcal{C}_{\mathsf{CAT}}(K=m_X, n).
\end{align}
\end{example}

\begin{example}[The NML codelength for a Continuous Data]
\label{ex:Example_Continuous}
Let the domain of continuous data sequence $x^n$ is $\Xcal = [0, 1)$.
We divide $\Xcal$ into $m_X$ cells $\lbrace I^{X}_k \rbrace$ of equal length $\frac{1}{m_X}$:
\begin{align*}
    \Xcal = \cup_{k=0}^{m_X - 1} I^{X}_k, \ I^{X}_k = \left[\frac{k}{m_X}, \frac{k+1}{m_X}\right) (k=0, \ldots, m_X-1)
\end{align*}
We define the histogram density function model $\mathsf{HIS}^{m_X}$ as follows:
\begin{align*}
    \mathsf{HIS}^{m_X} \!= \Bigg\{  p(X; \bm{\theta})=\!\! \sum_{k=0}^{m_X-1}\! \theta_{k} \mathbb{I}[X \in I^{X}_k] \,\Bigg| \, \bm{\theta} = (\theta_{0}, \dots, \theta_{m_X-1}), \theta_{k} \geq 0, \!\sum_{k=0}^{m_X-1}\!\frac{\theta_k}{m_X} = 1\Bigg\}.
\end{align*}

The maximum likelihood estimator for this model results in $\hat{\theta}_{k}(x^n) = \frac{n(X \in I^{X}_k)}{n} m_X$. Consequently, the maximum log-likelihood of data is calculated as:
\begin{align*}
    \log p(x^n; \hat{\bm{\theta}}(x^n)) 
    &= \sum_{k=0}^{m_X-1} n(X \in I^{X}_k)\log \hat{\theta}_{k}\\
    &= \sum_{k=0}^{m_X-1} n(X \in I^{X}_k) \log \frac{n(X \in I^{X}_k)}{n} + n \log m_X, 
\end{align*}
where $n(X \in I^{X}_k)$ is the frequency of observations in interval $I^{X}_k$ in the data sequence $x^n$, formally defined as $n(X \in I^{X}_k) = \sum_{i=1}^{n}\deffunc(x_{i^{}} \in I^{X}_k)$. 
Let
$\tilde{x}^n=(\tilde{x}_1,\ldots \tilde{x}_n)\in \{0,\ldots,m_X-1\}^n$ be a sequence of $n$ bin labels and
$I_{\tilde{x}^n}^X\coloneqq I_{\tilde{x}_1}^X\times \cdots \times I_{\tilde{x}_n}^X\subset [0, 1)^n$ be the $n$-dimensional hyper-bin associated with $\tilde{x}^n$.
The parametric complexity of $\mathsf{HIS}^{m_X}$ is then given by:
\begin{align*}
    \log \mathcal{C}_n (\mathsf{HIS}^{m_X})
    &= \log\int_{[0,1)^n}  p(x^n; \hat{\bm{\theta}}(x^n)) dx^n\\
    &= \log\sum_{\tilde{x}^n\in \{0,\ldots,m_X-1\}^n} \int_{I_{\tilde{x}^n}^X}  p(x^n; \hat{\bm{\theta}}(x^n)) dx^n\\
    &= \log \sum_{\tilde{x}^n \in \lbrace 0, \ldots, m_X-1 \rbrace^n} \prod_{k=0}^{m_X-1} \rbr{\frac{n(\tilde{X}=k)}{n}}^{n(\tilde{X}=k)} \\
    &= \log \mathcal{C}_{\mathsf{CAT}}(K=m_X, n),
\end{align*}
which results in the same value as the parametric complexity of the $m_X$-valued categorical model $\mathsf{CAT}^{m_X}$. 

The NML codelength thus becomes
\begin{align}
    & \mathcal{SC}(x^n; \mathsf{HIS}^{m_X}) \notag \\ 
    = & -  \sum_{k=0}^{m_X-1} n(X \in I^{X}_k)\log \frac{n(X \in I^{X}_k)}{n} - n \log m_X + \log \mathcal{C}_{\mathsf{CAT}}(K=m_X, n), \label{eq:L_HIS_mX}
\end{align}
for continuous data $x^n$ based on $\mathsf{HIS}^{m_X}$.
\end{example}


\subsection{Structural Causal Model}
A Structural Causal Model (SCM) \cite{pearl2009causality} represents the data-generating process through a set of structural assignments. In an SCM, variables are expressed as functions of their parent variables (direct causes) and exogenous variables. In particular, when we consider only two variables $X$ and $Y$, which are both one-dimentional, with the causal graphs $X\to Y$ or $X \gets Y$, SCMs can be represented as follows:
\begin{align*}
    & M_{X \to Y}:
    \begin{cases}
    X=E_{X}\\
    Y=f(X, E_Y)
    \end{cases}
    & M_{X \gets Y}:
    \begin{cases}
    X=g(Y, E_{X})\\
    Y=E_Y,
    \end{cases}
\end{align*}
where $f$ and $g$ are functions, and $E_X, E_Y$ are one-dimensional exogenous variables such that $E_X \Perp E_Y$. Here, the statistical models $M_{X \to Y}, M_{X \gets Y}$, derived from each SCM, are referred to as causal models. In general, without constraints on the distributions of the exogenous variables and/or on the forms of functions $f$ and $g$, it is not identifiable whether samples from the joint distribution $P(X, Y)$ are induced by the causal relationship of $M_{X \to Y}$ or $M_{X \gets Y}$ \cite[Proposition 4.1]{peters2017elements}. In other words, causal discovery from observational data usually requires making specific assumptions on the functional forms and/or the distributions of exogenous variables, which limits the scope of the joint distributions led by SCMs. Some notable SCMs in the context of causal discovery are listed in below.
\begin{example}[Additive Noise Model ($\textsf{ANM}$)]
    \label{ex:ANM}
    \textsf{ANM}\cite{hoyer2008nonlinear, peters2010identifying, peters2014causal} assumes a data generating process as per the following equations where effects are the nonlinear functions of their causes with additive noise:
    \begin{align*}
        & M_{X \to Y}:
        \begin{cases}
        X=E_{X}\\
        Y=f(X) + E_Y,
        \end{cases}
        & M_{X \gets Y}:
        \begin{cases}
        X=g(Y) + E_{X}\\
        Y=E_Y,
        \end{cases}
    \end{align*}
    where additive noises $E_X, E_Y$ satisfy $E_X \Perp Y, E_Y \Perp X$, respectively.
\end{example}

\begin{example}[Linear NonGaussian Acyclic Model ($\textsf{LiNGAM}$)]
    \label{ex:LiNGAM}
    \textsf{LiNGAM}\cite{shimizu2006linear, shimizu2011directlingam}, which is a special case of \textsf{ANM}, assumes that causal relationships are linear and that exogenous variables follow non-Gaussian distributions: 
    \begin{align*}
        & M_{X \to Y}:
        \begin{cases}
        X=E_{X}\\
        Y=b_{X \to Y} X + E_Y,
        \end{cases}
        & M_{X \gets Y}:
        \begin{cases}
        X=b_{X \gets Y}Y + E_{X}\\
        Y=E_Y,
        \end{cases}
    \end{align*}
    where $b_{X \to Y}, b_{X \gets Y} \in \mathbb{R}$ are the linear coefficients, and $E_X, E_Y$ follow Non-Gaussian distributions.
\end{example}

\begin{example}[Linear Mixed causal model ($\textsf{LiM}$)]
    \label{ex:LiM}
    $\textsf{LiM}$ \cite{zeng2022causal} is an extension of \textsf{LiNGAM} to accommodate mixed data types, including both continuous and discrete variables. In the case of continuous variables, $\textsf{LiM}$ assumes the same SCMs as $\textsf{LiNGAM}$. When $X$ is a continuous variable and $Y$ is a binary variable, $\textsf{LiM}$ assumes the following SCM: 
    \begin{align*}
        & M_{X \to Y}:
        \begin{cases}
        X=E_{X}\\
        Y=\begin{cases}
            1 \ (b_{X \to Y} X + E_Y > 0), \\
            0 \ (\text{otherwise}), \\
          \end{cases} \\
        \end{cases}\\
    \end{align*}
    where $E_Y$ follows a Logistic distribution.
\end{example}

\begin{example}[LiNGAM with latent confounder ($\textsf{lvLiNGAM}$)]
    \label{ex:LiNGAMwlc}
    \textsf{lvLiNGAM} extends the basic \textsf{LiNGAM} model to incorporate hidden common causes \cite{hoyer2008estimation}. Besides the linear and non-Gaussian assumptions of \textsf{LiNGAM}, \textsf{lvLiNGAM} explicitly models unobserved common causes $C$. For a one-dimensional $C$, the model can be represented with the following SCM:
    \begin{align*}
        & M_{X \gets C \to Y}:
        \begin{cases}
        X=\lambda_X C + E_{X}\\
        Y=\lambda_Y C + E_Y,\\
        \end{cases}
    \end{align*}
    where $\lambda_X, \lambda_Y \in \mathbb{R}$ denote the direct causal effects from the unobserved common cause $C$ to each observed variable to the observed variables $X$ and $Y$, respectively. In this SCM, the latent variable $C$ is assumed to be non-Gaussian and independent of $E_X$ and $E_Y$, and it is assumed to have linear effects on the observed variables.
\end{example}

\subsection{Reichenbach Problem}
This section describes the Reichenbach problem formally, which is central to our study.
Suppose we have i.i.d.\ observational data $z^n = (x^n , y^n) \in \Xcal^n \times \Ycal^n$ generated from joint distribution $P(X, Y)$.
Here, $X$ and $Y$ can be either discrete or continuous variables. 

Based on Reichenbach's common cause principle, we can categorize the causal relationship between $X$ and $Y$ into four cases. The goal is to infer one of these causal models $M$ that best explains the underlying causal relationship:
\begin{itemize}[itemsep=2pt]
\item $M_{X \Perp Y}$: $X$ and $Y$ are independent, with no direct causal link.
\item $M_{X \gets C \to Y}$: There exist common causes $C$ that causally influence both $X$ and $Y$.
\item $M_{X \to Y}$: $X$ causes $Y$, but not vice versa.
\item $M_{X \gets Y}$: $Y$ causes $X$, but not vice versa.
\end{itemize}

In solving the Reichenbach problem, it is desirable to use methods that do not make any assumption about  unobserved variables $C$.  This is crucial because it is almost impossible to have prior knowledge of all potential unobserved variables. Therefore, we propose a method capable of detecting the presence of unobserved common causes, even in situations where the candidates for latent variables are unknown or when unexpected unobserved variables are present, without relying on assumptions about $C$.

To employ existing causal discovery methods that rely on modeling unobserved variables, one needs to know the nature of the unobserved causes beforehand. However, it is challenging to acquire complete knowledge about all possible unobserved variables solely from the domain knowledge of the observed variables. Thus, when applying these methods to real-world data, a heightened level of caution is required. Nevertheless, existing approaches to the Reichenbach problem typically depend on models that make assumptions about the relationships between unobserved variables $C$ and the observed variables $X, Y$.

\section{Existing Work}\label{sec3}
Existing causal discovery methods from a joint distribution of two variables can be categorized into two approaches: one employs the identifiability of the model and the other is based on  the principle of algorithmic independence of conditionals. Furthermore, in the context of the Reichenbach problem, there are methods focused on solving the sub-problem of choosing between $X\to Y$ and $X \gets Y$, and those that attempt to solve the Reichenbach problem by making assumptions about unobserved common causes. In this section, we describe these approaches and discuss why it is difficult to detect unobserved common factors without making assumptions about these causes.

\subsection{Identifiable models}\label{subsec1}
In this approach, when we formulate causal relationships using SCMs, we restrict the functional forms and the distributions of the exogenous variables.
The causal models induced by SCMs are said to be identifiable if different causal structures always lead to different joint distributions of the observed variables. 

In order to infer a causal structure using identifiable models, we assume that the observed data have been generated by a distribution belonging to one of these models.
Then, we infer $X$ is the cause of $Y$ when the corresponding model explains the data best of all models. This reasoning applies similarly to other causal relationships.

To determine the causal direction,
various type of indentifiable models are studied in existing work so that various types of data can be applied.
Shimizu et al. \cite{shimizu2006linear} showed that causal models become identifiable if function is linear and the distribution of exogenous variables is non-Gaussian as in Example~\ref{ex:LiNGAM} (\textsf{LiNGAM}). 
A general case of \textsf{LiNGAM} is Additive Noise Models (\textsf{ANM}s, \cite{hoyer2008nonlinear}), as detailed in Example~\ref{ex:ANM}, where we assume that the noise is additive and independent of the cause. In general, \textsf{ANM} is identifiable if functional form is non-linear even without imposing any restrictions on the noise distributions~\cite{hoyer2008nonlinear, peters2014causal}. 
\cite{peters2010identifying} proposed \textsf{DR}, which is an extension of \textsf{ANM}s to discrete variables and showed that \textsf{ANM} is generally identifiable in discrete case. 
For mixed-type data, \cite{zeng2022causal} formulated Linear Mixed causal model (\textsf{LiM}) as shown in Example~\ref{ex:LiM}, and Li et al. \cite{li2022hybrid} proposed an algorithm, \textsf{HCM}, which formulates nonlinear causal relationships as mixed-SCMs. 

In this approach, we can recover the true causal relationship based on observed data as long as there is no latent confounder, i.e., the true distribution belongs to one of the models we adopt. Therefore, this framework requires practitioners to make use of domain knowledge such that causal relationship, if any, can be formulated in a certain type of SCM.
However, causal discovery methods under the assumption of causal sufficiency, which assume no unobserved common causes, can lead to incorrect conclusions when applied to data where unobserved common causes actually exist. To avoid such issues, it becomes crucial to consider unobserved common causes in causal discovery.

As for models that allow for the presence of unobserved common causes, Hoyer et al.  proposed a model called \textsf{lvLiNGAM}~\cite{hoyer2008estimation}, which extends \textsf{LiNGAM} to the models with latent counfounding variables by explicitly modeling them. \textsf{lvLiNGAM}, shown in Example~\ref{ex:LiNGAMwlc}, assumes that the latent confounder $C$ follow non-Ganssian distributions and relationships between $C$ and observed variables are linear. 
Parce LiNGAM (\textsf{BUPL}, \cite{tashiro2014parcelingam}) and Repetitive Causal Discovery (\textsf{RCD}, \cite{maeda2020rcd}) make the same assumptions on SCMs. Maeda et al. extended \textsf{lvLiNGAM} to its non-linear variant, and proposed \textsf{CAMUV} algorithm~\cite{maeda2021causal} . 

For models to be identifiable including $M_{X \gets C \to Y}$, one must make an assumption on the model of unobserved common causes. This means that one must have known the nature of the unobserved common causes beforehand to successfully perform causal discovery. Otherwise, conclusions led by this framework will be unreliable when unobserved common causes do not follow the assumptions. Even with domain knowledge of $X$ and $Y$, it remains challenging to accurately determine the form of SCM for an unobserved common cause, which a practitioner is not certain if exists, will satisfy.

\subsection{Algorithmic Independence of Conditionals}
This section describes the approach based on the principle of algorithmic independence of conditionals, which is described as follows: if true causality is $X \to Y$, 
then mechanism $P^*(Y|X)$ is independent of the cause $P^*(X)$ \cite{janzing2010causal}, where $P^*$ denotes true distributions we assume under the corresponding causal relationship. 
By denoting the Kolmogorov complexity as $K$, it leads to the following inequality: $$K(P^*(X)) + K(P^*(Y|X)) < K(P^*(Y)) + K(P^*(X|Y)),$$
if true causality is $X \to Y$~\cite{stegle2010probabilistic}.
This inequality can not be evaluated due to the following two reasons: Kolmogorov complexity is not computable and true distribution is unknown. Therefore, Marx and Vreeken~\citep{marx2021formally} has justified that this principle leads to the approximation build upon two-part MDL as follows:
\begin{align*}
    \Lcal(z^n; M_{X \to Y}) < \Lcal(z^n; M_{X \gets Y}),
\end{align*}
where $\Lcal(z^n; M_{X \to Y})$ is the description length of data $z^n$ under statistical model $M_{X \to Y} = M_X\times M_{Y|X}$ in which we assume $P^*(X) \in M_X$ and $P^*(Y|X) \in M_{Y|X}$, formally defined as follows:
\begin{align*}
     \Lcal(z^n; M_{X \to Y}) = L(x^n; M_{X}) + L(y^n; x^n, M_{Y|X}).
\end{align*}
We define $\Lcal(z^n; M_{X \gets Y})$ analogously. 
\if0
\begin{align*}
K(P^*(X)) + K(P^*(Y|X)) + H(P^*(X,Y)) = \mathbb{E}_{P^*(X, Y)} [\Kcal_{X \to Y}(x^n, y^n)], 
\end{align*}
where the equation holds up to an additive constant that is independent of $P(X)$ and $P(Y|X)$. Here, $H(\cdot)$ denotes Shannon entropy and $\Kcal_{X \to Y}(x^n,y^n)$ is defined as 
\begin{align*}
    \Kcal_{X\to Y} (x^n,y^n) \coloneqq  
    K(P^*(X)) + K( x^n | P^*(X)) + 
    K(P^*(Y|X)) + K(y^n | x^n, P^*(Y|X)).    
\end{align*}
In case $X\to Y$ is true and $P^*(X)\in M_X$ and $P^*(Y|X)\in M_{Y|X}$, the following holds up to constant term:
\begin{align*}
 L(x^n|P^*(X)) + L(y^n|x^n, M_{Y|X}) \le 
 L(y^n|P^*(Y)) + L(x^n|y^n, M_{X|Y}) 
\end{align*}
Since the Kolmogorov complexity is not computable, we approximate the Kolmogorov complexity via other computable complexities:
All causal discovery methods based on this principle justify the following criteria to infer the underlying causal direction: 
\begin{align*}
 \hat{M}(z^n)  = \argmin_{M \in \cbr{M_{X\to Y}, M_{X\gets Y} }} \Lcal(z^n;M),  
\end{align*}
where
\begin{align*}
\Lcal(z^n;M_{X\to Y}) &= L(x^n| M_X) + L(y^n| x^n ,M_{Y|X}), \\ 
\Lcal(z^n;M_{X\gets Y}) &= L(y^n| M_Y) + L(x^n| y^n ,M_{X|Y}),
\end{align*}
since the Kolmogorov complexity is not computable. 
\fi
Methods such as Causal Inference by Stochastic Complexity 
 (\textsf{CISC}, \cite{budhathoki2017mdl}), Accurate Causal Inference on Discrete data (\textsf{ACID}, \cite{budhathoki2018accurate}) and Distance Correlation (\textsf{DC}, \cite{liu2016causal}) model $\Lcal(z^n; M)$ using refined MDL, Shannon Entoropy and distance correlation, respectively.

If one considers including confounded model $M_{X \gets C \to Y}$ with unobserved common causes $C$, the description length under the joint distributions of that model, $\Lcal(z^n; M_{X \gets C \to Y})$ which is an approximation of $K(P^{*}(X, Y, C))$, must be evaluated and compared with directed cases of $M_{X\to Y}$ and $M_{X \gets Y}$. A naive approximation approach based solely on likelihood invariably leads to the selection of the confounded model $M_{X \gets C \to Y}$, due to its inherently minimized complexity. To address this issue, Confounded-or-Causal (\textsf{COCA}) method was developed \cite{kaltenpoth2019we}. \textsf{COCA} selects between $X \to Y$ and $X \gets C \to Y$ under the assumption that not only the observed variables but also unobserved common causes $C$ follow specific-dimensional Gaussian distributions.
It employs Bayesian coding to approximate the description length of data, considering not only the likelihood but also the complexity of the model class, including $C$. This approach enables to comparison between different sizes of statistical models, specifically $M_{X\to Y}$ and $M_{X \gets C \to Y}$. However, it is important to note that this approach relies on certain assumptions about $C$.
\newline

In Summary, in challenging the Reichenbach problem, all existing methods face a common limitation: they require additional assumptions about unobserved common causes. Our previous work has already shown that \textsf{CLOUD} successfully overcomes this limitation in the context of discrete variables~\cite{kobayashi2022detection}. We claimed that this is achievable by comparing models with different capacities, as quantified using the NML codelength. In this paper, our objective is to expand the applicability of \textsf{CLOUD} to encompass continuous and mixed data types, thereby enhancing its effectiveness in solving the Reichenbach problem across a wider range of data types.

\section{Proposed Method}\label{sec4}
In this section, we extend \textsf{CLOUD} to cover all data types, which was originally designed for the Reichenbach problem in discrete data. In \textsf{CLOUD}, we formulate causal models for four causal relationships in the Reichenbach problem (Section \ref{sec4.1}). Then, based on MDL principle, we calculate the codelength of the observed data $z^n$ based on NML coding and select a causal model $M$ which achieves the shortest codelength (Section \ref{sec4.2}). 

We formulate confounded model $M_{X \gets C \to Y}$ to represent any joint distribution, thus avoiding assumptions about $C$. Since this model has the highest complexity compared to the other three, 
the NML-based codelength on this model is not necessarily the shortest although the negative loglikilhood is the smallest.
Thus, by considering both model complexity and data likelihood, we can select an appropriate causal model among models of different complexities.


\subsection{Model}\label{sec4.1}
In this section, we describe causal models for cases where both $X$ and $Y$ are discrete or continuous variables, represented as statistical models derived from the assumed Structural Causal Models (SCMs) for each causal relationship.

First, we formulate SCMs for each causal model $M$ to describe the causal relationships between $X \in \Xcal$ and $Y \in \Ycal$:
\begin{align*}
&M_{X \Perp Y}:
\begin{cases}
X=E_{X}\\
Y=E_{Y}
\end{cases}
&M_{X \gets C \to Y}:         
\begin{cases}
X=f(C, E_{X}) \\ 
Y=g(C, E_{Y})
\end{cases} \\
&M_{X \to Y}:
\begin{cases}
X=E_{X} \\
Y=f(X) + E_Y 
\end{cases}
&M_{X \gets  Y}: 
\begin{cases}
X=g(Y) + E_{X}\\
Y=E_Y 
\end{cases}
\end{align*}
Here, the exogenous variables $E_X \in \Xcal, E_Y \in \Ycal$ are independent of each other. 
For $M_{X \Perp Y}$, we assume faithfullness~\cite{spirtes2000causation} in the sense that we regard $X$ and $Y$ is causally independent when they are statistically independent. 

For $M_{X \gets C \to Y}$, any probability density functions on $(\mathbb{Z} / m_X\mathbb{Z})\times (\mathbb{Z} / m_Y\mathbb{Z})$ and probability density functions on $(\mathbb{R} / \mathbb{Z})^2$ belong to the $M_{X \gets C \to Y}$ by considering appropriate choice of $f$, $g$ and $C$. In this sense, we do not impose assumptions on the unobserved common cause $C$ for confounded case. 

For $M_{X \to Y}$ and $M_{X \gets Y}$, we assume additive noise models (\textsf{ANMs}) \cite{peters2010identifying}, which employs functions from the function sets $\Fcal = \lbrace f: \Xcal \to \Ycal \mid f \text{ is not constant} \rbrace$ and $\Gcal = \lbrace g: \Ycal \to \Xcal \mid g \text{ is not constant}\rbrace$.  In case $X$ is discrete, we set $\Xcal = \lbrace 0, 1, \dots, m_X - 1 \rbrace$ and addition is taken over $\mathbb{Z} / m_X\mathbb{Z}$ as in \cite{peters2017elements, peters2011causal}. In case $X$ is continuous, we set $\Xcal =  [0,1)$ and addition is taken over $\mathbb{R} / \mathbb{Z}$. The same goes for cases $Y$ is either discrete or continuous. In continuous cases, the addition can be regarded as addition over $\mathbb{R}$ for data scaled with a sufficiently small constant $\epsilon$. This implies that, in practical applications, models with periodic boundary conditions can be considered as including non-periodic \textsf{ANMs}. 

Second, we identify the causal models $M_{X\Perp Y}$, $M_{X\gets C \to Y}$, $M_{X\to Y}$
and $M_{X\gets Y}$ with a set of joint probability distributions on $(X,Y)$ that models imply. It can be justified in case in which only observations from the joint distribution are available.

\paragraph{Discrete Case: }
If both $X$ and $Y$ are discrete, then $\Xcal=\ZZ/m_X\ZZ$ and $\Ycal=\ZZ/m_Y\ZZ$.
Any discrete probability distribution on $(X,Y)$ can be identified with a parameter $\thetab \in \bm{\Theta}$ by the following relation:
\begin{align*}
P(X, Y ; \bm{\theta}) = \prod_{k, k'}\theta_{k, k'}^{\deffunc[X=k, Y=k']},
\end{align*}
where we define $\bm{\Theta} = \cbr{ \thetab = (\theta_{k, k'})_{k,k'} \in \mathbb{R}^{m_X\times m_Y} \middle| \theta_{k,k'} \ge 0, \sum \theta_{k,k'} =1 }$.
Based on this parametrization, we represent causal models by characterizing the respective subset of the parameter space as follows:
\begin{align*}
    M_{X \Perp Y}&=\left\{P\left(X, Y  ; \thetab \right) = \prod_{k, k'}\theta_{k, k'}^{\deffunc[X=k, Y=k']} \,\middle|\, \thetab\in \bm{\Theta}_{X \Perp Y} \right\}, \\
    M_{X \gets C \to Y}&= \left\{P\left(X, Y  ; \thetab \right) =\prod_{k, k'}\theta_{k, k'}^{\deffunc[X=k, Y=k']} \,\middle|\, \thetab\in \bm{\Theta}_{X \gets C \to Y} \right\}, \\
    M_{X \to Y}&=\left\{P\left(X, Y  ; \thetab\right) = \prod_{k, k'}\theta_{k, k'}^{\deffunc[X=k, Y=k']} \,\middle|\, \thetab\in \bm{\Theta}_{X \to Y} \right\} ,\\
    M_{X \gets Y}&=\left\{P\left(X, Y  ; \thetab \right)=\prod_{k, k'}\theta_{k, k'}^{\deffunc[X=k, Y=k']} \;\middle|\, \thetab\in \bm{\Theta}_{X \gets Y} \right\},
\end{align*}
where
\begin{subequations}\label{eq:Theta_discrete}
    \begin{align}
    \bm{\Theta}_{X \Perp Y} &= \cbr{ (\theta^{X}_k\,\theta^{Y}_{k'})_{k, k'} \in \bm{\Theta} \,\middle|\,  \sum_k \theta^{X}_k = 1, \theta^{X}_k \ge 0, \sum_{k'} \theta^{Y}_{k'} = 1, \theta^{Y}_{k'} \ge 0 }, \\
    \bm{\Theta}_{X \gets C \to Y} &= \bm{\Theta} ,\\
    \bm{\Theta}_{X \to Y} &= \cbr{ (\theta^{X}_k\,\theta^{Y}_{f(k)+k'})_{k, k'} \in \bm{\Theta} \,\middle|\,  \sum_k \theta^{X}_k = 1, \theta^{X}_k \ge 0, \sum_{k'} \theta^{Y}_{k'} = 1, \theta^{Y}_{k'} \ge 0, f \in \Fcal},\\
    \bm{\Theta}_{X \gets Y} &= \cbr{ (\theta^{X}_{g(k')+ k}\,\theta^{Y}_{k'})_{k, k'} \in \bm{\Theta} \,\middle|\,  \sum_k \theta^{X}_k = 1, \theta^{X}_k \ge 0, \sum_{k'} \theta^{Y}_{k'} = 1, \theta^{Y}_{k'} \ge 0, g \in \Gcal}.
\end{align}
\end{subequations}
Note that the addition in subscripts is taken over each respective finite space. For discrete case, we encode the data $z^n$ based on these discrete causal models.

\paragraph{Continuous Case: } 
In case the domains of $X$ and $Y$ are both continuous, we represent their joint density function by the infinite union of histogram densities.

Firstly, we consider partitioning $\Xcal$ into $m_X$ equally-sized cells $\lbrace I^{X}_k \rbrace \ (k=0, \ldots, m_X-1)$ and $\Ycal$ into $m_Y$ equally-sized cells $\lbrace I^{Y}_{k'} \rbrace \ (k'=0, \ldots, m_Y-1)$.

We then define the two-dimensional density function model $\mathsf{HIS}^{m_X, m_Y}$ as follows:
\begin{align*}
    \mathsf{HIS}^{m_X, m_Y} \!= \cbr{  p(X, Y; \bm{\theta})=\! \sum_{k=0}^{m_X-1}\sum_{k'=0}^{m_Y-1}\! \theta_{k,k'} \mathbb{I}[X \in I^{X}_k, Y \in I^{Y}_{k'}] \,\middle|\, \thetab \in \bm{\Theta}^{m_X,m_Y} }
\end{align*}
where 
\begin{align*}
    \bm{\Theta}^{m_X,m_Y} = \cbr{\thetab= (\theta_{k,k'})_{k,k' } \in \mathbb{R}^{m_X\times m_Y}\,\middle| \, \theta_{k,k'} \geq 0, \!\sum_{k=0}^{m_X-1}\sum_{k'=0}^{m_Y-1}\!\frac{\theta_{k,k'}}{m_X m_Y} = 1}.
\end{align*}
Then, the infinite union $\bigcup_{m_X,m_Y}\mathsf{HIS}^{m_X, m_Y}$ can represent any joint density function n an arbitrary precision level. Due to the universality of $M_{X\gets C \to Y}$, we represent
\begin{align*}
M_{X \gets C \to Y}=\bigcup_{m_X, m_Y} M^{m_X, m_Y}_{X \gets C \to Y},\quad 
M^{m_X, m_Y}_{X \gets C \to Y} = \mathsf{HIS}^{m_X, m_Y}.
\end{align*}
It means that we can identify any joint distribution on $(X,Y)$ by an element of $M_{X\gets C\to Y}$, in the sense that it gives the same distribution of codelength for a given code when the approximation level is fixed. 

By defining $\bm{\Theta}^{m_X,m_Y}_{X\Perp Y}, \bm{\Theta}^{m_X,m_Y}_{X\to Y}, \bm{\Theta}^{m_X,m_Y}_{X\gets Y} \subset \bm{\Theta}^{m_X,m_Y}$ similarly as equations~\eqref{eq:Theta_discrete}, 
we can represent the other models as follows:
\begin{align*}
    M_{X \Perp Y}=\bigcup_{m_X, m_Y} M^{m_X, m_Y}_{X \Perp Y},&\quad M^{m_X, m_Y}_{X \Perp Y} = \cbr{ p(X, Y; \bm{\theta}) \,\middle|\, \thetab \in \bm{\Theta}^{m_X,m_Y}_{X\Perp Y} }   \\
    M_{X \to Y}=\bigcup_{m_X, m_Y} M^{m_X, m_Y}_{X \to Y},&\quad M^{m_X, m_Y}_{X \to Y} = \cbr{ p(X, Y; \bm{\theta}) \,\middle|\, \thetab \in \bm{\Theta}^{m_X,m_Y}_{X\to Y} }   \\ 
    M_{X \gets Y}=\bigcup_{m_X, m_Y} M^{m_X, m_Y}_{X \gets Y},&\quad M^{m_X, m_Y}_{X \gets Y} = \cbr{ p(X, Y; \bm{\theta}) \,\middle|\, \thetab \in \bm{\Theta}^{m_X,m_Y}_{X\gets Y} }.   
\end{align*}
We can regard $M_{X\Perp Y}$ as a set of any density function that is decomposable as $p(X,Y)=p(X)p(Y)$.
As for $M_{X\to Y}$, we can see it as a special form of decomposition of the density function such as $p(X, Y)=p(X)p(Y-f(X))$ using a function $f$.




\subsection{Algorithm}\label{sec4.2}
We regard the Reichenbach problem as a problem of model selection and conduct a selection under the MDL criterion. That is, among the causal models defined in Section \ref{sec4.1}, we infer the underlying causal relationship of the data $z^n$ is such a causal model $M$ that yields the shortest description length of the data.

\paragraph{Discrete Case:} 
\begin{algorithm}
\caption{Main function in Discrete Case}
\label{alg:main_discrete_case}
\begin{algorithmic}[1]
\Require Data \( z^n \), A set of model candidates \( \mathcal{M} \)
\Ensure Best model \( \hat{M} \)
\For{\( M \) in \( \mathcal{M} \)}
    \State Compute \( \mathcal{L}^{d}(z^n; M) \) using \eqref{eq:L_disc_all}
\EndFor
\State \( \hat{M} = \mathrm{argmin}_{M\in\Mcal} \mathcal{L}^{d}(z^n; M) \)
\State \textbf{return} \( \hat{M} \)
\end{algorithmic}
\end{algorithm}
We infer a causal relationship by selecting a discrete causal model according to the following equation:
\begin{align}\label{alg_disc}
    \Hat{M} (z^n)= \argmin_{M \in \mathcal{M}} \mathcal{L}^{d}(z^n; M),
\end{align}
where $\mathcal{L}^{d}(z^n; M)$ is a universal codelength of the discrete data $z^n$ for the discrete causal model $M$, and we employ NML code to compute the codelength. $\mathcal{M}$ is a set of model candidates, $\Mcal = \cbr{M_{X \to Y}, M_{X \gets Y}, M_{X \Perp Y}, M_{X \gets C \to Y}}$. We can also set $\mathcal{M}$ to be its subset, such as $\cbr{M_{X \to Y}, M_{X \gets Y}}$, based on the prior knowledge.
The algorithm for discrete case is shown in Algorithm \ref{alg:main_discrete_case}.

We calculate $\mathcal{L}^{d}(z^n; M)$ for $M_{X \Perp Y}$ and $M_{X \gets C \to Y}$ using exact NML codes. For $M_{X \to Y}$ and $M_{X \gets Y}$, we employ two-stage coding based on the NML code since it is hard to exactly calculate the codelength of the NML code because functions $f$ and $g$ are not fixed. For $M_{X \to Y}$, we compute the codelength based on two-stage coding with respect to function $f$ as follows:
\begin{align}
\label{eq:disc_x2y_two_part_code}
\Lcal^{d}(z^n;M_{X\to Y}) = L(f; M_{X\to Y}) + L(z^n; M_{X\to Y}, f ),
\end{align}
The first term on the right-hand side is a codelength required to encode a function $f$, and the second term represents the NML codelength for the model $M_{X \to Y}$ with function $f$ fixed. The same applies to $M_{X\gets Y}$.
The forms of each codelength are provided in Proposition~\ref{prop1}. We provide its proof in Appendix \ref{secB1}.

\begin{prop}[NML-based codelength for discrete data]\label{prop1}
For a given discrete data $z^n$ and the discrete causal models $M\in\cbr{M_{X \Perp Y},M_{X \to Y},M_{X \gets Y},M_{X \gets C \to Y}}$, the codelengths defined as above have the following expressions:
\begin{align}
    &\mathcal{L}^{d}(z^n ; M) \notag \\
    &=\begin{dcases}
    \ell_X+\ell_Y + \log\left(\mathcal{C}_{\mathsf{CAT}}(m_X, n) \cdot \mathcal{C}_{\mathsf{CAT}}(m_Y, n)\right) & \textnormal{if }M=M_{X \Perp Y} \text{,}\\
    \ell_{X,Y} + \log  \mathcal{C}_{\mathsf{CAT}}(m_X  m_Y, n) & \textnormal{if }M=M_{X \gets C \to Y}\text{,} \\
    \ell_X +\ell_{Y|X}(\fhat)+ \log\left(\mathcal{C}_{\mathsf{CAT}}(m_X, n) \cdot \mathcal{C}_{\mathsf{CAT}}(m_Y, n)\right) \\
    \quad + \log  (m_Y^{m_X - 1} - 1) & \textnormal{if }M=M_{X \to Y}\text{,} \\ 
    \ell_Y +\ell_{X|Y}(\ghat) + \log\left(\mathcal{C}_{\mathsf{CAT}}(m_X, n) \cdot \mathcal{C}_{\mathsf{CAT}}(m_Y, n)\right) \\
    \quad + \log (m_X^{m_Y - 1} - 1) & \textnormal{if }M=M_{X \gets Y},
    \end{dcases}\label{eq:L_disc_all}
\end{align}
where 
\begin{align*}
    \ell_X &= - \sum_{k=0}^{m_X - 1} n(X = k) \log \frac{n(X = k)}{n},\\
    \ell_{Y|X}(f) &= - \sum_{k'=0}^{m_Y - 1} n(Y=f(X) + k') \log \frac{n(Y=f(X) + k')}{n}, \\
    \ell_{X,Y} &=-\sum_{k=0}^{m_X - 1} \sum_{k'=0}^{m_Y - 1} n\left(X = k, Y=k'\right) \log \frac{n\left(X = k, Y = k'\right)}{n},
\end{align*}
and similarity for $\ell_{Y}$ and $\ell_{X|Y}$.  Here, $\fhat$ and $\ghat$ are functions derived through maximum likelihood estimation.
\end{prop}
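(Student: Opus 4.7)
The plan is to evaluate the defining identities \eqref{stochastic_complexity} and \eqref{eq:disc_x2y_two_part_code} directly for each of the four models, reducing every parametric complexity to the categorical one computed in Example~\ref{ex:Example_Discrete}. For $M_{X\gets C\to Y}$ the parameter space is the entire probability simplex on $\Xcal\times\Ycal$, so the model is just the categorical distribution on an alphabet of size $m_X m_Y$; the MLE is $\hat\theta_{k,k'}=n(X=k,Y=k')/n$, the negative maximum log-likelihood becomes $\ell_{X,Y}$, and the parametric complexity equals $\log\mathcal{C}_{\mathsf{CAT}}(m_X m_Y,n)$ by direct application of Example~\ref{ex:Example_Discrete}. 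For $M_{X\Perp Y}$, the independence constraint forces the joint MLE to factorize as $\hat\theta_{k,k'}=(n(X=k)/n)(n(Y=k')/n)$, so the negative maximum log-likelihood is $\ell_X+\ell_Y$, and the sum defining the parametric complexity factors into two independent sums over $X^n$ and $Y^n$ that evaluate to $\mathcal{C}_{\mathsf{CAT}}(m_X,n)\,\mathcal{C}_{\mathsf{CAT}}(m_Y,n)$.

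For $M_{X\to Y}$, I would apply two-stage coding as in \eqref{eq:disc_x2y_two_part_code}. With $f\in\Fcal$ fixed, the joint density $P(X=k)\,P(Y-f(X)=k')$ makes $X$ and $\tilde Y\coloneqq Y-f(X)$ (addition in $\ZZ/m_Y\ZZ$) independent marginal categoricals, so the fixed-$f$ NML codelength splits exactly as in the $M_{X\Perp Y}$ case but with $Y$ replaced by $\tilde Y$; maximising over $f$ picks out $\fhat$ and yields $\ell_X+\ell_{Y|X}(\fhat)+\log(\mathcal{C}_{\mathsf{CAT}}(m_X,n)\,\mathcal{C}_{\mathsf{CAT}}(m_Y,n))$ for the data-dependent part. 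For the first-stage code on $f$, the induced joint depends on $f$ only through the quotient of $\Fcal$ under the action $f\sim f+c$ for $c\in\ZZ/m_Y\ZZ$ (any additive shift being absorbed into the marginal of $\tilde Y$), so the effective set of functions has cardinality $(m_Y^{m_X}-m_Y)/m_Y = m_Y^{m_X-1}-1$, and a uniform prefix code over this quotient contributes $\log(m_Y^{m_X-1}-1)$. The case $M_{X\gets Y}$ is entirely symmetric with the roles of $X$ and $Y$ swapped.

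The main obstacle I anticipate is the last step: correctly identifying the ``effective'' set of functions to be encoded. One must both subtract the $m_Y$ constant functions excluded by the definition of $\Fcal$ and then quotient by the additive-shift equivalence that leaves the joint distribution invariant; the combinatorial identity $(m_Y^{m_X}-m_Y)/m_Y=m_Y^{m_X-1}-1$ is precisely what produces the term appearing in the statement. Everything else is a mechanical expansion of \eqref{stochastic_complexity} together with elementary factorisations of finite sums.
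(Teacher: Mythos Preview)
Your proposal is correct and follows essentially the same route as the paper's own proof in Appendix~\ref{secB1}: the confounded case is handled as a single categorical on $m_Xm_Y$ outcomes, the independent case by factorising both the MLE and the normalising sum, and the direct case via two-stage coding together with the bijection $(x,y)\mapsto(x,y-f(x))$ reducing the fixed-$f$ NML to the independent case, with the function code obtained by removing the $m_Y$ constant maps and quotienting by the additive $\ZZ/m_Y\ZZ$ shift. The paper records these last two points as separate Theorems~\ref{th:function_code_length} and~\ref{th:appro_nml_code_lenght}, but the content and the combinatorial count $(m_Y^{m_X}-m_Y)/m_Y=m_Y^{m_X-1}-1$ are exactly what you wrote.
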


Proposition \ref{prop1} shows that while $M_{X \gets C \to Y}$ is the most expressive model and the negative log-likelihood of the data is always minimized in $M_{X \gets C \to Y}$, its parametric complexity is the largest among all models. Therefore, by employing NML-based codelength as shown in Eq. \eqref{eq:L_disc_all}, we can compare between models with varies capacities as per Eq. \eqref{alg_disc} under the trade-off between data likelihood and model complexity.

The algorithm for estimation of function $\fhat$ or $\ghat$ is shown in Algorithm \ref{alg:optimize_regression} in Appendix \ref{secA1}. 

\paragraph{Continuous Case:} 
\begin{algorithm}
\caption{Main function in Continuous Case}
\label{alg:main_continuous_case}
\begin{algorithmic}[1]
\Require Data \( z^n \), a set of model candidates \( \mathcal{M} \), and candidates $\Pcal$ for bin numbers $m_X$ and $m_Y$
\Ensure Best model \( \hat{M} \)
\For{\( (m_X, m_Y)\) in \( \Pcal \)}
    \For{\( M \) in \( \mathcal{M} \)}
        \State Compute \( \mathcal{L}^{c}(z^n,m_X, m_Y; M) \)  using \eqref{eq:Lc_MmXmY}
    \EndFor
\EndFor
\State \textbf{return} \( \hat{M} = \argmin_{M \in \Mcal} \min_{(m_X, m_Y)\in \Pcal }\mathcal{L}^{c}(z^n,m_X, m_Y; M) \)
\end{algorithmic}
\end{algorithm}
We infer a causal relationship by selecting a continuous causal model $M$ according to the following equation:
\begin{align}
    \Hat{M} (z^n) &= \argmin_{M \in \mathcal{M}}  \min_{(m_X,m_Y) \in \Pcal} \mathcal{L}^{c}(z^n,m_X,m_Y; M).
\end{align}
Here, $\mathcal{L}^{c}(z^n,m_X,m_Y; M)$ is a codelength of data $z^n$ required to encode $z^n$ up to an arbitrarily precision $\delta>0$.
In order to construct a universal code of $z^n$ under causal model $M$, we employ two-stage coding for $m_X, m_Y$ as follows:
\begin{align}
\label{eq:Lc_2stg_mXmY}
    \Lcal^{c}(z^n,m_X,m_Y; M) &= L(m_X, m_Y; M) + L(z^n; M, m_X, m_Y),  
\end{align}
where $L(m_X, m_Y; M)$ is the codelength required to encode the numbers of bins $m_X$ and $m_Y$. 
The second term, $L(z^n; M, m_X, m_Y)$, represents codelengths for encoding $z^n$ based on continuous models of corresponding bin sizes, $M_{X \Perp Y}^{m_X,m_Y},M_{X\gets C\to Y}^{m_X,m_Y},M_{X\to Y}^{m_X,m_Y}$, or $M_{X\gets Y}^{m_X,m_Y}$.

In order to encode $z^n=(x^n,y^n)$ with the precision $\delta$, we again employ two-part coding through $\textnormal{disc}(x^n;m_X)$ and $\textnormal{disc}(y^n;m_Y)$. 
Here, we define $\textnormal{disc}: \Xcal^n \to \lbrace 0, \ldots, m_X-1 \rbrace^n$ as a function that discretizes continuous data $x^n \in \Xcal$ into $m_X$ equal categories.
Note that the discretized data is encoded by the strategy mentioned above as in discrete case.
After the discretized data is encoded, additional codelength are required to attain the prespecified precision level. This additional codelength is denoted by
$L(x^n; \textnormal{disc}(x^n; m_X))$ or $L(y^n; \textnormal{disc}(y^n; m_Y))$. Thus, the second term in Eq.~\eqref{eq:Lc_2stg_mXmY} has the following expression.
\begin{align}
\label{eq:Lc_2stg_discxnyn}
L(z^n; M, m_X, m_Y) =& \Lcal^{d}(\textnormal{disc}(x^n; m_X) , \textnormal{disc}(y^n; m_Y); \textnormal{DISC}(M; m_X, m_Y)) \notag \\
    &+ L(x^n; \textnormal{disc}(x^n; m_X)) + L(y^n; \textnormal{disc}(y^n; m_Y)),
\end{align}
where $\textnormal{DISC}(M; m_X, m_Y)$
denotes the discrete causal model of causality $M \in \mathcal{M}$ with the category numbers $m_X$ and $m_Y$. 
Consequently, the form of each codelength have an expression as provided in Proposition~\ref{prop2}.
We provide its proof in Appendix \ref{secC1}.
\begin{prop}[Codelength in Continuous Case]\label{prop2}
The codelength of data $z^n$ required to encode $z^n$ as described above has the following expression:
\begin{align}
\label{eq:Lc_MmXmY}
    \Lcal^{c}(z^n; M) & = \Lcal^{d}(\textnormal{disc}(x^n; m_X) , \textnormal{disc}(y^n; m_Y); \textnormal{DISC}(M; m_X, m_Y)) \\ \notag
    & \quad + L^{c \to d}(m_X, n) + L^{c \to d}(m_Y, n) + \textnormal{const.},
\end{align}
where the first term on the right-hand side in Eq. \eqref{eq:Lc_MmXmY} can be calculated using Eq. \eqref{eq:L_disc_all} in Proposition \ref{prop1} and
$L^{c \to d}(m,n)$ for $m\in \mathbb{N}^+$ is defined as:
\begin{align}
\label{L_c2d}
    L^{c \to d}(m, n) = - n \log m + \log^{*} m,
\end{align}
where $\log^{*} m$ is given by Rissanen's universal integer coding \cite{rissanen1983universal}
\begin{align}\label{eq:RissanenIntCode}
    \log^{*}m = \log c + \log m + \log \log m + \cdots\quad  (c\approx 2.865),
\end{align}
where the summation is only taken over nonnegative terms. 
The constant term only depends on the precision level $\delta$.
\end{prop}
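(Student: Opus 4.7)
The plan is to prove Proposition~\ref{prop2} by unfolding the two-stage coding scheme given in equations~\eqref{eq:Lc_2stg_mXmY} and~\eqref{eq:Lc_2stg_discxnyn}, evaluating each of the three ingredients (the code for the bin counts $m_X, m_Y$, the NML codelength for the discretized data, and the within-bin refinement terms), and finally regrouping the pieces into the form in~\eqref{eq:Lc_MmXmY}.

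First I would handle the code for the bin counts. Since $m_X$ and $m_Y$ are positive integers whose values are not known a priori, the natural choice is Rissanen's universal integer code. Assuming independence of the two counts in the prefix description and invoking~\eqref{eq:RissanenIntCode}, this immediately gives $L(m_X,m_Y;M)=\log^{*}m_X+\log^{*}m_Y$. Note that this term does not depend on $M$; all model-specific information is deferred to the second stage.

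Next I would compute the within-bin refinement terms $L(x^n;\textnormal{disc}(x^n;m_X))$ and $L(y^n;\textnormal{disc}(y^n;m_Y))$. After the discretized sequence $\textnormal{disc}(x^n;m_X)$ is known, each $x_i$ is localized to a cell $I^X_{\tilde{x}_i}$ of width $1/m_X$; to refine its position to precision $\delta$ requires $\log\left(1/(m_X\delta)\right)=-\log m_X-\log\delta$ bits. Summing over the $n$ samples yields $L(x^n;\textnormal{disc}(x^n;m_X))=-n\log m_X - n\log\delta$, and symmetrically $L(y^n;\textnormal{disc}(y^n;m_Y))=-n\log m_Y - n\log\delta$. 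The $\delta$-dependent pieces are independent of $M$, $m_X$, $m_Y$, and $z^n$, so they are absorbed into the constant.

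Finally I would substitute~\eqref{eq:Lc_2stg_discxnyn} into~\eqref{eq:Lc_2stg_mXmY} and collect the terms. The sum $\log^{*}m_X - n\log m_X$ matches the definition of $L^{c\to d}(m_X,n)$ in~\eqref{L_c2d}, and likewise for $m_Y$, so that
\begin{align*}
    \Lcal^{c}(z^n;M) &= \underbrace{\log^{*}m_X+\log^{*}m_Y}_{L(m_X,m_Y;M)} + \Lcal^{d}(\textnormal{disc}(x^n;m_X),\textnormal{disc}(y^n;m_Y);\textnormal{DISC}(M;m_X,m_Y)) \\
    &\quad \underbrace{- n\log m_X - n\log\delta}_{L(x^n;\textnormal{disc}(x^n;m_X))} \underbrace{- n\log m_Y - n\log\delta}_{L(y^n;\textnormal{disc}(y^n;m_Y))} \\
    &= \Lcal^{d}(\textnormal{disc}(x^n;m_X),\textnormal{disc}(y^n;m_Y);\textnormal{DISC}(M;m_X,m_Y)) \\
    &\quad + L^{c\to d}(m_X,n) + L^{c\to d}(m_Y,n) + \textnormal{const.},
\end{align*}
which is exactly~\eqref{eq:Lc_MmXmY}. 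The first term on the right-hand side is computed via Proposition~\ref{prop1}, and this completes the proof.

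I expect no deep obstacle in this argument: everything is bookkeeping once the two-part code is laid out. The one subtle point worth emphasizing is that the within-bin refinement produces a contribution that is \emph{identical} across all four candidate models $M\in\{M_{X\Perp Y},M_{X\to Y},M_{X\gets Y},M_{X\gets C\to Y}\}$ and across all choices of $(m_X,m_Y)$ that yield the same marginal bin-width terms $-n\log m_X$ and $-n\log m_Y$; the $\delta$-dependence is therefore harmless for model selection, justifying that it can be safely grouped into the constant.
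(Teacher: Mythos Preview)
Your proposal is correct and follows the two-stage description in equations~\eqref{eq:Lc_2stg_mXmY}--\eqref{eq:Lc_2stg_discxnyn} directly. The paper's own proof in Appendix~\ref{secC1} takes a slightly different route: rather than evaluating the within-bin refinement terms $L(x^n;\textnormal{disc}(x^n;m_X))$ once and for all as you do, it proceeds case by case over the four causal models, for each one computing $L(z^n;M,m_X,m_Y)$ as the NML codelength under the corresponding continuous histogram model (e.g.\ $\mathcal{SC}(x^n;\mathsf{HIS}^{m_X})+\mathcal{SC}(y^n;\mathsf{HIS}^{m_Y})$ for $M_{X\Perp Y}$, $\mathcal{SC}(z^n;\mathsf{HIS}^{m_X,m_Y})$ for $M_{X\gets C\to Y}$, and so on). The $-n\log m_X$ and $-n\log m_Y$ contributions then emerge from the density normalization built into the histogram NML formulas (Example~\ref{ex:Example_Continuous} and its bivariate analogue), and the paper never writes the $\delta$-dependent piece explicitly. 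Your argument is more uniform and avoids the model-by-model verification; the paper's approach, on the other hand, makes explicit that the continuous NML code under each histogram submodel coincides with the discrete NML code plus the bin-width correction, which serves as a consistency check between the density-based and the discretize-then-refine viewpoints. Both routes are valid and lead to the same conclusion.
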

The algorithm for the continuous case is presented in Algorithm~\ref{alg:main_continuous_case}. For computational efficiency, a practical algorithm can limit search range for $m_X$ and $m_Y$ to a subset $\Pcal \subset (\mathbb{N}^+)^2$.

\paragraph{Mixed Case: }
We can consider the both of mixed cases as a special case of the continuous case. If $X$ is a continuous variable and $Y$ is a discrete variable, we regard $y^n=(y_i)_i$ as continuous values by mapping $y^n$ to $\text{cont}(y^n; m_Y) = (y_i/m_Y)_i$ and calculate the description length using continuous causal models by $\Lcal^{c}(x^n, \text{cont}(y^n), m_X, m_Y; M)$. The causal discovery algorithm for the mixed case is presented in Algorithm~\ref{alg:main_mixed_case}.

\begin{algorithm}
\caption{Main function in Mixed Case in which $X$ is continuous}
\label{alg:main_mixed_case}
\begin{algorithmic}[1]
\Require Data \( z^n = (x^n , y^n) \), a set of model candidates \( \mathcal{M} \), search range $\Pcal\subset \mathbb{N}^+$ for $m_X$, and the number of categories $m_Y$
\Ensure Best model \( \hat{M} \)
\For{\( m_X\) in \( \Pcal \)}
    \For{\( M \) in \( \mathcal{M} \)}
        \State Compute \( \mathcal{L}^{c}(x^n, \text{cont}(y^n;m_Y), m_X, m_Y; M) \)  using \eqref{eq:Lc_MmXmY}
    \EndFor
\EndFor
\State \textbf{return} \( \hat{M} = \argmin_{M \in \Mcal} \min_{ m_X \in \Pcal }\mathcal{L}^{c}(x^n, \text{cont}(y^n; m_Y), m_X, m_Y; M) \)
\end{algorithmic}
\end{algorithm}

\section{Theoretical analysis of the statistical consistency}\label{sec5}
In this section,  we provide the theoretical analysis on the consistency of our method. By consistency,
we mean that the probability that our method select the true model converges to 1 at the limit of large $n$.
Noting the inclusion relation in our case, $M_{X \Perp Y}, M_{X \to Y}, M_{X \gets Y} \subset M_{X \gets C \to Y}$,
we consider the true model for a given probability distribution is the minimal model that contains it. 
\subsection{Discrete Case}
\begin{theorem}
Define true model $M^*(P^*)$ given as follows:
\begin{align*}
    M^*(P^*) = \begin{cases}
        M_{X \Perp Y} &  P^* \in M_{X \Perp Y} \\
        M_{X \to Y} &  P^* \in M_{X \to Y} \\
        M_{X \gets Y} &  P^* \in M_{X \gets Y} \\
        M_{X \gets C \to Y} &  P^* \in M_{X \gets C \to Y} \setminus \rbr{M_{X \Perp Y} \cup M_{X \gets Y} \cup M_{X \gets Y}  }.
    \end{cases}
\end{align*}
Then, the probability that \textsf{CLOUD} outputs $M^*(P^*)$ given $n$ i.i.d.\ samples from $P^*$, converges to 1 in the limit of $n\to \infty$, provided that the maximum likelihood estimation of $\fhat$ and $\ghat$ is successful. 
\end{theorem}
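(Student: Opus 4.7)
The strategy is to show that, for every competing model $M \neq M^*(P^*)$, the codelength difference $\mathcal{L}^d(z^n; M) - \mathcal{L}^d(z^n; M^*(P^*))$ is positive with probability tending to one, so that the minimizer $\hat{M}$ in Eq.~\eqref{alg_disc} coincides with $M^*(P^*)$.

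First, I would apply the standard Rissanen asymptotic
\[
\log \mathcal{C}_{\mathsf{CAT}}(K, n) = \frac{K-1}{2} \log\frac{n}{2\pi} + O(1)
\]
to rewrite each codelength of Proposition~\ref{prop1} as
\[
\mathcal{L}^d(z^n; M) = -\log P\bigl(z^n; M, \hat{\bm{\theta}}\bigr) + \frac{d_M}{2}\log n + O(1),
\]
with effective dimensions $d_M = m_X + m_Y - 2$ for $M\in\{M_{X\Perp Y}, M_{X\to Y}, M_{X\gets Y}\}$ and $d_{X\gets C\to Y} = m_X m_Y - 1$. The two-stage term $\log(m_Y^{m_X-1}-1)$ used to encode $\hat f$ (and its analogue for $\hat g$) is $O(1)$ in $n$ and is absorbed into the remainder.

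Second, I would control the negative log-likelihood in two regimes. If $P^* \in M$, Wilks' theorem applied to the continuous parameters $(\bm{\theta}^X,\bm{\theta}^Y)$, after $\hat f$ and $\hat g$ have locked onto their true values by the hypothesis of successful MLE, yields
\[
-\log P\bigl(z^n; M, \hat{\bm{\theta}}\bigr) + \log P^*(z^n) = O_p(1).
\]
If $P^* \notin M$, the law of large numbers together with the consistency of the MLE gives
\[
-\tfrac{1}{n}\log P\bigl(z^n; M, \hat{\bm{\theta}}\bigr) \longrightarrow H(P^*) + \mathrm{KL}(P^* \| M), \qquad \mathrm{KL}(P^* \| M)>0,
\]
so the log-likelihood gap relative to $M^*(P^*)$ grows linearly in $n$.

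Third, I would case-split on the competing model $M$: (i) if $M \supsetneq M^*(P^*)$, both contain $P^*$, so the complexity gap $\tfrac{d_M-d_{M^*}}{2}\log n>0$ dominates the $O_p(1)$ likelihood gap; (ii) if $P^* \notin M$, the linear-in-$n$ Kullback--Leibler term swamps the $O(\log n)$ complexity difference. In both subcases the gap is positive with probability tending to one. Enumerating the four possibilities for $M^*(P^*)$ and taking a union bound over the at most three competing models completes the argument. The main obstacle is the non-nested pair $(M_{X\to Y}, M_{X\gets Y})$ when $P^*$ lies in only one of them: to invoke case (ii) one needs $\mathrm{KL}(P^* \| M_{X\gets Y}) > 0$ whenever $P^* \in M_{X\to Y}\setminus M_{X\Perp Y}$, which is precisely the identifiability of discrete additive-noise models due to Peters et al.~\cite{peters2011causal} and must be invoked to guarantee that the two directed models are distinguishable away from the independent submodel.
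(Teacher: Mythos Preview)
Your proof takes essentially the same approach as the paper's: decompose each codelength into a maximum-likelihood term plus a complexity penalty, invoke the standard asymptotics on each piece, and case-split according to whether the competing model contains $P^*$ (log-order gap from complexity) or not (linear-order gap from the KL term). The paper organizes the case-split by the value of $M^*(P^*)$ rather than by $M$, uses Akaike's expansion \eqref{eq:asymp-likelihood} in place of Wilks, and only asserts $\log\mathcal{C}_n(M_{X\gets C\to Y})-\log\mathcal{C}_n(M_{X\Perp Y})=\Omega(\log n)$ rather than the sharp Rissanen formula, but the substance is identical; if anything your write-up is a touch more careful, since you isolate the identifiability of discrete ANMs from \cite{peters2011causal} as the ingredient guaranteeing $\mathrm{KL}(P^*\|M_{X\gets Y})>0$ when $P^*\in M_{X\to Y}\setminus M_{X\Perp Y}$, a point the paper's proof leaves implicit.
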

\begin{proof}
\paragraph{In case of $ P^* \in M_{X \Perp Y} $:}
The asymptotic expansion of log-likelihood~\cite{akaike1998information} implies
\begin{align}\label{eq:asymp-likelihood}
    -\log P\left(z^{n} ; M,\hat{\bm{\theta}}\left(z^{n}\right)\right)
    =   nH(P^*) + n \textnormal{KL}(P^*||P) + O_P\left(1\right),
\end{align}
for all $M\in\Mcal_{\rm all}$. Here, $H(P)$ denotes the entropy of $P$ defined as $$H(P)= -\sum_{k,k'} P(X=k,Y=k')\log P(X=k,Y=k'),$$ $\textnormal{KL}(P^*||P)$ denotes the KL-divergence defined as $$K(P^*||P)=\sum_{k,k'} P^{*}(X=k, Y=k') \frac{\log P^{*}(X=k, Y=k')}{\log P(X=k, Y=k')},$$ and $O_P(\cdot)$ denotes the asymptotic order with respect to $n$ in probability.
Specifically, in case of $M \in \cbr{M_{X \Perp Y}, M_{X \gets C \to Y}}$, the asymptotic expansion of Eq. \eqref{eq:asymp-likelihood} becomes
\begin{align}
    -\log P\left(z^{n} ; M,\hat{\bm{\theta}}\left(z^{n}\right)\right)
    =   nH(P^*) + O_P\left(1\right),
\end{align}
since $P^{*}$ belongs to both $M_{X \Perp Y}$ and $M_{X \gets C \to Y}$. We see $\log \Ccal_n(M_{X \Perp Y}) < \log \Ccal_n(M_{X \gets C \to Y})$, which leads to
\begin{align*}
    \Lcal^{d} (z^n;M_{X \gets C \to Y}) - \Lcal^{d} (z^n;M_{X \Perp Y}) 
    &= \log \Ccal_n(M_{X \gets C \to Y}) - \log \Ccal_n(M_{X \Perp Y}) + O_P\left( 1\right)
    \\
    &=\Omega(\log n)+O_P(1).
\end{align*}
Thus, the probability that $M_{X \gets C \to Y}$ achieves the smallest codelength converges to 0.
As for $M \in \lbrace M_{X \to Y}, M_{X \gets Y} \rbrace$, the negative log-likelihood function in the first term in Eq.~\eqref{stochastic_complexity} divided by $n$ converges to 
\begin{align}
\label{loglikelihood_LLN}
    - \frac{1}{n}\log P(z^n; \thetab) 
    &=- \sum_{k, k'} \frac{n(X=k, Y=k')}{n} \log P(X=k, Y=k'; \, {\thetab})  \notag \\
    &=- \sum_{k, k'} \theta_{k, k'}^* \log P(X=k, Y=k'; \,{\thetab}) +  o_P\left( 1\right)
\end{align}
as $n\to \infty$ since $\frac{n(X=k, Y=k')}{n} \to \theta_{k, k'}^{*}$. Since $P^*\notin M$ implies $\thetab \ne \thetab^{*}$ for those models, from the Gibbs inequality, this value is strictly larger than $H(P^*)$. The difference between the first terms gets dominant since the parametric complexity as well as the codelength to encode functions divided by $n$ converges to 0 for each model~\cite{kontkanen2008nml}. It then follows that the probability of having $\Lcal^{d} (z^n;M) > \Lcal^{d} (z^n;M_{X \Perp Y})$ tends to one, which implies the consistency of \textsf{CLOUD}.


\paragraph{In case of $ P^* \in M_{X \to Y}  $ or $ P^* \in M_{X \gets Y} $:}
By the symmetry, we restrict ourselves to the case of $ P^* \in M_{X \to Y} $ without loss of generality.
Let $\thetab^*$ be a parameter such that $P^*(X,Y) = P(X,Y;\thetab^*)$.
The first term of Eq.~\eqref{stochastic_complexity} for both $M_{X\to Y}$ and $M_{X \gets C \to Y}$ converges to $-\log P (z^n; \thetab^*)+O_P(1)$. As for the second term, we see
$\log \Ccal_n(M_{X \to Y}) < \log \Ccal_n(M_{X \gets C \to Y})$, which leads to
\begin{align*}
    \Lcal^{d} (z^n;M_{X \gets C \to Y}) - \Lcal^{d} (z^n;M_{X \to Y})
    &= \log \Ccal_n(M_{X \gets C \to Y})  - \log \Ccal_n(M_{X \to Y}) + O_P\left( 1\right)
    \\
    &=\Omega(\log n)+O_P(1).
\end{align*}
Therefore, the probability that $M_{X\gets C\to Y}$ achieves the smallest codelength converges to 0. 
As discussed in the case above, the negative log-likelihood function in the first term in Eq.~\eqref{stochastic_complexity} divided by $n$ converges to strictly larger value than $H(P^*)$ if $M\in\cbr{M_{X \Perp Y}, M_{X \gets Y}}$. The difference between the first terms gets dominant as mentioned above. 
Hence, the probablity that $\Lcal^{d}(z^n; M_{X\to Y})$ is shortest converges to 1.

\paragraph{In case of $P^* \in M_{X \gets C \to Y} \setminus \rbr{M_{X \gets Y} \cup M_{X \gets Y} \cup \cup M_{X \Perp Y} }$:}
As discussed in the case above, the negative log-likelihood function in the first term in Eq.~\eqref{stochastic_complexity} divided by $n$ converges to strictly larger value than $H(P^*)$ if $M\ne M_{X\gets C\to Y}$. Since the first term is dominant as mentioned in above, the probability that $M_{X \gets C \to Y}$ will be selected converges to 1 as $n\to\infty$.
\end{proof}

\subsection{Continuous Case}
\begin{theorem}
Let $\Pcal$ be a finite set so that $\cup_{(m_X,m_Y)\in\Pcal} \textsf{HIS}^{m_X,m_Y} = \textsf{HIS}^{\mbar_X,\mbar_Y}$ holds for some $(\mbar_X, \mbar_Y)\in \Pcal$.
For true distribution $p^*$, we define $p^*_{m_X,m_Y} \in \textsf{HIS}^{m_X,m_Y}$ as follows: 
$$ p^*_{m_X,m_Y}(x,y) = \frac{\iint_{(x',y') \in I(x,y)} p^*(x',y')dx'dy'}{m_Xm_Y} , $$
where $I(x,y) = I_k^X\times I_{k'}^Y$ such that $x\in I_k^X$ and $y\in I_{k'}^Y$ holds.
Then we define true model $M^*(p^*)$ as follows:
\begin{align*}
    M^*(p^*) = \begin{cases}
        M_{X \Perp Y} &  p^*_{\mbar_X,\mbar_Y} \in M_{X \Perp Y} \\
        M_{X \to Y} &  p^*_{\mbar_X,\mbar_Y} \in M_{X \to Y} \\
        M_{X \gets Y} &  p^*_{\mbar_X,\mbar_Y} \in M_{X \gets Y} \\
        M_{X \gets C \to Y} &  p^*_{\mbar_X,\mbar_Y} \in M_{X \gets C \to Y} \setminus \rbr{M_{X \Perp Y} \cup M_{X \gets Y} \cup M_{X \gets Y}  } .
    \end{cases}
\end{align*}
Then, the probability that \textsf{CLOUD} outputs $M^*(p^*)$ using $\Pcal$ and $n$ i.i.d.\ samples from $p^*$ converges to 1 in the limit of $n\to \infty$, provided that the maximum likelihood estimation of $\fhat$ and $\ghat$ is successful. 
\end{theorem}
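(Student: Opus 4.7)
The plan is to reduce the continuous case to the discrete consistency theorem, exploiting the fact that for any fixed bin sizes the codelength formula of Proposition~\ref{prop2} cleanly separates an $M$-dependent term from an $M$-independent remainder.

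First, I would fix an arbitrary pair $(m_X,m_Y)\in\Pcal$ and rewrite Proposition~\ref{prop2} as
$$\Lcal^c(z^n,m_X,m_Y;M) = \Lcal^d\!\bigl(\textnormal{disc}(x^n;m_X),\textnormal{disc}(y^n;m_Y);\textnormal{DISC}(M;m_X,m_Y)\bigr) + R(m_X,m_Y,n),$$
where $R(m_X,m_Y,n) = L^{c\to d}(m_X,n) + L^{c\to d}(m_Y,n) + \textnormal{const}$ does not depend on $M$. Since $(\textnormal{disc}(x^n;m_X),\textnormal{disc}(y^n;m_Y))$ is an i.i.d.\ sample of size $n$ drawn from the categorical distribution $P^*_{m_X,m_Y}$ associated with the bin-averaged density $p^*_{m_X,m_Y}$, the discrete theorem applies verbatim at every bin level. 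Specializing to $(\mbar_X,\mbar_Y)$ and using $M^*(p^*) = M^*_{\textnormal{disc}}(P^*_{\mbar_X,\mbar_Y})$ from the theorem's definition, this already gives $\argmin_{M\in\Mcal} \Lcal^c(z^n,\mbar_X,\mbar_Y;M) \to M^*(p^*)$ in probability.

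Second, to promote this to the overall minimizer over both $M$ and $(m_X,m_Y)$, I would take a pairwise union bound over the finite set $(\Mcal\setminus\{M^*(p^*)\})\times\Pcal$ and show, for each pair $(M',(m_X,m_Y))$, that $\Lcal^c(z^n,m_X,m_Y;M') > \Lcal^c(z^n,\mbar_X,\mbar_Y;M^*(p^*))$ with probability tending to one. I would distinguish two cases. If $P^*_{m_X,m_Y}\notin \textnormal{DISC}(M';m_X,m_Y)$, the discrete log-likelihood under $M'$ incurs an $\Omega(n)$ KL penalty that overwhelms all $O(\log n)$ corrections coming from parametric complexity and $\log^*$ terms. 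Otherwise, I would combine the entropy chain rule $H(P^*_{\mbar_X,\mbar_Y}) = H(P^*_{m_X,m_Y}) + H(\textnormal{fine}\mid\textnormal{coarse})$---valid because the closure assumption on $\Pcal$ forces $m_X\mid\mbar_X$ and $m_Y\mid\mbar_Y$---with the identity $L^{c\to d}(m,n) = -n\log m + \log^* m$ to make the leading $O(n)$ contributions cancel, reducing the comparison to an $O(\log n)$ parametric-complexity comparison that the discrete theorem already resolves in favor of $M^*(p^*)$.

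The main obstacle I anticipate is this second case: a wrong model $M'$ that fits some coarsening of $p^*$ even though it fails to fit $p^*_{\mbar_X,\mbar_Y}$. The structural fact I would exploit is the inequality $\log(\mbar_X\mbar_Y/(m_X m_Y)) - H(\textnormal{fine}\mid\textnormal{coarse}) \ge 0$, with equality precisely when the fine-level density is a uniform lift of the coarse-level one. In the strict-inequality regime the leading $\Omega(n)$ term alone rules out the wrong alternative. In the equality regime $p^*_{\mbar_X,\mbar_Y}$ and $p^*_{m_X,m_Y}$ coincide as histogram densities, so $M'$ is forced to contain $p^*_{\mbar_X,\mbar_Y}$ as well; combined with the minimality built into the definition of $M^*(p^*)$ this yields $M^*(p^*)\subsetneq M'$, and the residual $O(\log n)$ parametric-complexity comparison is again won by $M^*(p^*)$. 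Making this case analysis rigorous---in particular tracking how the sub-model inclusions among $M_{X\Perp Y}, M_{X\to Y}, M_{X\gets Y}, M_{X\gets C\to Y}$ transfer across bin granularities---is where the technical weight of the proof will lie.
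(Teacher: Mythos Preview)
Your overall strategy --- separating out the $M$-independent remainder $R(m_X,m_Y,n)$ and then invoking the discrete consistency theorem at each fixed bin level --- is sound, and you correctly identify that the hard part is comparing codelengths across different granularities. However, the equality-regime argument has a concrete gap. From ``$p^*_{\bar m_X,\bar m_Y}\in M'$'' together with the case-by-case definition of $M^*(p^*)$ you cannot conclude $M^*(p^*)\subsetneq M'$: the four models are not linearly ordered by inclusion (for instance $M_{X\Perp Y}\not\subset M_{X\to Y}$ because constant functions are excluded from $\Fcal$), so a density can sit in both $M_{X\Perp Y}$ and $M_{X\to Y}$ without one model containing the other. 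More seriously, even if the inclusion held, your residual $O(\log n)$ comparison pits $\Ccal_n(\textnormal{DISC}(M';m_X,m_Y))$ at a \emph{coarse} level against $\Ccal_n(\textnormal{DISC}(M^*;\bar m_X,\bar m_Y))$ at the \emph{fine} level. Since parametric complexity grows with the number of bins (roughly as $\tfrac{K-1}{2}\log n$), the coarse-level term for $M'$ can be strictly smaller than the fine-level term for $M^*$, so the comparison you need simply fails. Upper-bounding $\min_{(m_X,m_Y)}\Lcal^c(\cdot;M^*)$ by its value at $(\bar m_X,\bar m_Y)$ is too loose for this step.

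The paper sidesteps the cross-granularity comparison entirely by working in the continuous KL framework and proving the Pythagorean identity
\[
\mathrm{KL}(p^*\Vert\hat p)\;=\;\mathrm{KL}(p^*\Vert p^*_{m_X,m_Y})\;+\;\mathrm{KL}(p^*_{m_X,m_Y}\Vert\hat p)
\]
for every $\hat p\in\textsf{HIS}^{m_X,m_Y}$. Combined with the closure assumption on $\Pcal$, this shows that for each fixed $M$ the $O(n)$ part of $\min_{(m_X,m_Y)\in\Pcal}\Lcal^c(z^n,m_X,m_Y;M)$ is exactly $nH(p^*)+n\min_{\hat p\in M^{\bar m_X,\bar m_Y}}\mathrm{KL}(p^*\Vert\hat p)$, i.e.\ the minimization over $\Pcal$ collapses to the single finest level $(\bar m_X,\bar m_Y)$. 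All model comparisons are then carried out at that one granularity, where the discrete theorem applies verbatim. Your entropy-chain-rule inequality is in fact a rephrasing of the Pythagorean identity specialized to $\hat p=p^*_{\bar m_X,\bar m_Y}$, so your route can be salvaged by adopting the KL formulation and first minimizing over $(m_X,m_Y)$ \emph{separately for each $M$} before comparing across models, rather than doing a pairwise union bound against the fixed anchor $(M^*,\bar m_X,\bar m_Y)$.
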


\begin{proof}
From the definition of $\Lcal^{c}(z^n,m_X,m_Y; M)$ in Eq.~\eqref{eq:Lc_MmXmY}, we see
\begin{align}\label{eq:Lc_n_order}
&\Lcal^{c}(z^n,m_X,m_Y; M)\notag\\
&=
 \Lcal^{d}(\textnormal{disc}(x^n; m_X) , \textnormal{disc}(y^n; m_Y); \textnormal{DISC}(M; m_X, m_Y)) -n\log m_Xm_Y +\textnormal{const.} \notag\\
 &= \min_{P\in \textrm{DISC}(M;m_X,m_Y)} -\log P(\textnormal{disc}(x^n;m_X),\textnormal{disc}(y^n;m_Y)) + 
\Ccal_n(\textrm{DISC}(M;m_X,m_Y) ) \notag \\
&\phantom{\min_{P\in \textrm{DISC}(M;m_X,m_Y)}x}-n \log m_Xm_Y + \textnormal{const.}
\end{align}
For fixed $\Phat \in \textnormal{DISC}(M,m_X,m_Y)$, there is a corresponding density function $\phat\in M^{m_X,m_Y}\subset \textsf{HIS}^{m_X,m_Y}$ such that $m_Xm_Y\Phat(\textnormal{disc}(x^n;m_X),\textnormal{disc}(y^n;m_Y)) = \phat(x^n,y^n)$ for all $x^n$ and $y^n$. Using this 
correspondence, we see the following expansion in probability:
\begin{align}
\label{eq:aymptotic_likelihood_expansion_cont}
-\log \Phat(\textnormal{disc}(x^n;m_X),\textnormal{disc}(y^n;m_Y)) - n\log m_Xm_Y
&= -\log \hat p(z^n) \notag \\
&= nH(p^*) + n\textnormal{KL}(p^*\| \phat) + O_P(1),
\end{align}
where $H(p) = \iint p(x,y)\log\frac 1{p(x,y)}dxdy$
and $\mathrm{KL}(p||p') = \iint p(x,y)(\log {p(x,y)}-\log {p'(x,y)})dxdy$. We thus obtain
\begin{align*}
\Lcal^{c}(z^n,m_X,m_Y; M ) 
&= nH(p^*)+n\min_{\phat \in M^{m_X,m_Y}} \mathrm{KL}(p^*\mathop{\|}\phat) \\
& + \Ccal_n(\mathrm{DISC}(M; m_X,m_Y)) + O_P(1).  
\end{align*}
From the assumption on $\Pcal$, we further obtain
\begin{align*}
&\min_{(m_X,m_Y)\in\Pcal} \Lcal^{c}(z^n,m_X,m_Y; M ) \\
&= nH(p^*) + n\min_{(m_X,m_Y)\in\Pcal}\min_{\phat \in M^{m_X,m_Y}}\mathrm{KL}(p^*\mathop{\|}\phat) 
+ \Ccal_n(\mathrm{DISC}(M;m_X,m_Y)) + O_P(1)\\
&= nH(p^*) + n\min_{\phat \in M^{\mbar_X,\mbar_Y}}\mathrm{KL}(p^*\mathop{\|}\phat) 
+ \Ccal_n(\mathrm{DISC}(M;\mbar_X,\mbar_Y)) + O_P(1).
\end{align*}
For any $\phat \in \mathsf{HIS}^{m_X,m_Y}$, we see
\begin{align*}
&\mathrm{KL}(p^*\mathop{\|}\phat) \\
&= \sum_{k,k'}\iint_{(x,y)\in I^X_k\times I^Y_{k'}} p^*(x,y)\rbr{\log \frac{p^*(x,y)}{ p^*_{m_X,m_Y}(x,y)}+\log \frac{p^*_{m_X,m_Y}(x,y)}{\phat(x,y)}}dxdy \\
&= \mathrm{KL}(p^*\mathop{\|}p^*_{m_X,m_Y}) + \sum_{k,k'}\iint_{(x,y)\in I^X_k\times I^Y_{k'}}  p^*(x,y)\rbr{\log \frac{p^*_{m_X,m_Y}(x,y)}{\phat(x,y)}}dxdy\\
&= \mathrm{KL}(p^*\mathop{\|}p^*_{m_X,m_Y}) + \sum_{k,k'}\iint_{(x,y)\in I^X_k\times I^Y_{k'}} p^*_{m_X,m_Y}(x,y)\rbr{\log \frac{p^*_{m_X,m_Y}(x,y)}{\phat(x,y)}}dxdy\\
&= \mathrm{KL}(p^*\mathop{\|}p^*_{m_X,m_Y}) + \mathrm{KL}(p^*_{m_X,m_Y}\mathop{\|}\phat).
\end{align*}
The third equality holds since $\rbr{\log p^*_{m_X,m_Y}(x,y)-\log\phat(x,y)}$ is constant for all $(x,y)\in I^X_k\times I^Y_{k'}$ and $\iint_{(x,y)\in I^X_k\times I^Y_{k'}} p^*(x,y) dxdy= \iint_{(x,y)\in I^X_k\times I^Y_{k'}} p^*_{m_X,m_Y}(x,y)dxdy$.
Therefore, $\phat^*_{m_X,m_Y}$ is a unique minimizer of 
$\mathrm{KL}(p^*\mathop{\|} \phat)$, which implies that
\begin{align*}
\min_{(m_X,m_Y)\in\Pcal} \Lcal^{c}(z^n,m_X,m_Y; M ) 
&= nH(p^*)   + n \mathrm{KL}(p^*\mathop{\|}p^*_{\mbar_X,\mbar_Y}) \\
&+ \Ccal_n(\mathrm{DISC}(M;\mbar_X,\mbar_Y)) + O_P(1). 
\end{align*}
for $M$ such that $p^*_{\mbar_X,\mbar_Y}\in M$. If $p^*_{\mbar_X,\mbar_Y}\notin M$ otherwise, we see
\begin{align*}
 &\min_{(m_X,m_Y)\in\Pcal} \Lcal^{c}(z^n,m_X,m_Y; M ) -\min_{(m_X,m_Y)\in\Pcal} \Lcal^{c}(z^n,m_X,m_Y; M_{X\gets C\to Y} )\\
 &= \min_{(m_X,m_Y)\in\Pcal}\Lcal^{c}(z^n,m_X,m_Y; M )-nH(p^*) - n \mathrm{KL}(p^*\mathop{\|}p^*_{\mbar_X,\mbar_Y}) +o_P(n)\\
 &= \min_{\phat \in M^{\mbar_X,\mbar_Y}}   n  \mathrm{KL}(p^*_{\mbar_X,\mbar_Y}\mathop{\|}\phat) +o_P(n),
\end{align*}
since it always holds $p^*_{\mbar_X,\mbar_Y}\in  M_{X\gets C\to Y}$. This implies that the probability that $M$ is chosen converges to 0.
The consistency among models in which $p^*_{\mbar_X,\mbar_Y}\in M$ follows similarly as that of \textsf{CLOUD} in the discrete case.
\if0

\paragraph{In case of $ p^* \in M_{X \gets Y} \setminus M_{X \Perp Y}$ or $ p^* \in M_{X \to Y} \setminus M_{X \Perp Y}$}
In case $p^* \in M_{X \gets Y} \setminus M_{X \Perp Y}$ holds, there exists $m_X^*,m_Y^*$ such that
$p^*\in M_{X \gets Y}^{m_X^*,m_Y^*}$ and $p^*\notin M_{X \gets Y} \setminus \bigcup_{m_X^*\le m_X, m_Y^*\le m_Y} M_{X \gets Y}^{m_X,m_Y}$.
Thus, we see
\begin{align*}
    \Lcal^{c}(z^n,m_X^*,m_Y^*; M) &=  nH(p^*)+ \Ccal_n(\textrm{DISC}(M;m_X^*,m_Y^*) )+O_P(1),
\end{align*}
for $M \in\cbr{M_{X\gets Y} , M_{X\gets C\to Y}}$.
As discussed in the case above, since 
$\Ccal( \mathrm{DISC}(M_{X \gets C \to Y};{m_X^*,m_Y^*}))-\Ccal( \mathrm{DISC}(M_{X \gets Y};{m_X^*,m_Y^*})) >0$ diverges to $+\infty$, the probability
$ M_{X \gets C \to Y}^{m_X^*,m_Y^*} $ is chosen converges to 0.

For $M\in\cbr{M_{X \to Y},M_{X \Perp Y}}$, $ \min_{P\in \textrm{DISC}(M;m_X,m_Y)} -\log P(z^n) -n \log m_Xm_Y $ does not converges to $H(p^*)$ but to a strictly larger value similarly as discussed in the case above. Therefore, the consistency in the case of $ p \in M_{X \gets Y} \setminus M_{X \Perp Y}$ is shown.

The same argument holds for the case $ p \in M_{X \to Y} \setminus M_{X \Perp Y}$.

\paragraph{In case $p^*\in M_{X \gets C \to Y} \setminus \rbr{M_{X \gets Y} \cup M_{X \gets Y} } $ holds}
There exists $(m_X^*,m_Y^*)$ such that 
    $p^*\in M_{X \gets C \to Y}^{m_X^*,m_Y^*}$ and $p^*\notin M_{X \gets C \to Y} \setminus \bigcup_{m_X^*\le m_X, m_Y^*\le m_Y} M_{X \gets C \to Y}^{m_X,m_Y}$.
Therefore, we have 
\begin{align*}
    \Lcal^{c}(z^n,m_X^*,m_Y^*; M_{X\gets C\to Y}) &=  nH(p^*)+ \Ccal_n(\textrm{DISC}(M_{X\gets C\to Y};m_X^*,m_Y^*) )+O_P(1),
\end{align*}
For $M\ne M_{X \gets C \to Y}$, we have $$ 
    \Lcal^{c}(z^n,m_X^*,m_Y^*; M) = nH(p^*) + n\textnormal{KL}(p^*\| p') +\Ccal_n(\textrm{DISC}(M;m_X^*,m_Y^*) )+O_P(1),
$$
where $p'\ne p^*$. This implies that 
\begin{align*}
\frac 1n\rbr{\Lcal (z^n;M) - \Lcal (z^n;M_{X \gets C \to Y})} = \mathrm{KL}(p\|p') + o_P(1),
\end{align*}
since the stochastic complexity divided by $n$ converges to 0 for each model~\cite{kontkanen2008nml}.
It means that the probability that $ \frac1n\rbr{\Lcal (z^n;M) - \Lcal (z^n;M_{X \gets C \to Y})}$ is positive converge to 1, which implies the consistency in case of  $p^*\in M_{X \gets C \to Y} \setminus \rbr{M_{X \gets Y} \cup M_{X \gets Y} } $.
\fi
\end{proof}

\section{Experiment}\label{sec6}
In this section, we demonstrate that 1) our proposed method \textsf{CLOUD} can solve the Reichenbach problem in situations where it is difficult to make assumptions about unobserved common causes, and 2) \textsf{CLOUD} shows higher inference accuracy in identifying causal relationships compared to existing methods, even when the true data-generating mechanism violates the assumptions of our method.

In the first experiment, we designed the Reichenbach problem scenarios with all set of synthetic data types — discrete, mixed, and continuous, and verified that our proposed method exhibits high accuracy and consistency in solving the problem. In particular, it effectively detects the presence of unobserved common causes $C$ even in situations where the observed variables $X, Y$ are generated from complex mechanisms $f(X,C), g(Y,C)$.

In the second experiment, we compared the performance of our proposed method against existing methods in identifying causal relationships in synthetic data generated from either  $M_{X \to Y}$ or $M_{X \gets C \to Y}$, and demonstrated its effectiveness.

The third experiment tested the ability of our proposed method to determine the directions of causality and detect unobserved common causes in real-world data generated from unknown and complex data-generating process.

We implemented \textsf{CLOUD} in Python and provide the source code at \url{https://github.com/Matsushima-lab/CLOUD}.

\subsection{Consistency of \textsf{CLOUD} on the Reichenbach problem}\label{subsec6-1}
In the first experiment, we verified the performance of \textsf{CLOUD} on the Reichenbach problem with synthetic data and confirmed its consistency in model selection. We randomly selected each SCM corresponding to the four causal relationships, and then generated data $z^n$ with sample size $n=10^2, 10^3, 10^4$. The data-generating processes for each combination were defined as follows:
\paragraph{Discrete Case:}
\begin{outline}
        \1 $M_{X \Perp Y}$: \\
        $X$ and $Y$ were independently generated from categorical distributions.
        \1 $M_{X \gets C \to Y}$: \\
        $C \in \cbr{0,1,\ldots, 99}$ was independently generated from a categorical distribution, and $X$ and $Y$ were set to the quotient and remainder of $C$ divided by 10, respectively.
        \1 $M_{X \to Y}$: \\
        $X$ and $ E_Y$ were independently generated from categorical distributions, $f$ was generated uniformly randomly from all non-constant functions, and subsequently $Y$ was set to $Y = f(X) + E_Y (\bmod\  10)$. The same applied to the case of $M_{X\gets Y}$.
\end{outline}
\paragraph{Mixed Case ($X$ is continuous and $Y$ is discrete):}
\begin{outline}    
        \1 $M_{X \Perp Y}$: \\
        $X$ was independently generated from a Gaussian distribution, and $Y$ from a categorical distribution.
        \1 $M_{X \gets C \to Y}$: \\
        $C \in \cbr{0,1,\ldots, 99}$ was independently generated from a categorical distribution. $X$ and $Y$ were then generated as follows: $X = b \sin{C} + E_X, Y = \lfloor \frac{C}{10} \rfloor$, where $b$ was sampled from a uniform distribution $\mathcal{U}(2, 4)$ and $E_X$ follows a Gaussian distribution $\mathcal{N}(0, 0.1^2)$.
        \1 $M_{X \to Y}$: \\
        $X$ was generated from a mixture of Gaussian distributions with three clusters as:
        \[p(X) = 0.6 \cdot \mathcal{N}(X ; -5, 2^2) + 0.2 \cdot \mathcal{N}(X ; 0, 1^2) + 0.2 \cdot \mathcal{N}(X ; 5, 2^2).\]     Then, $X$ was divided into $m_X \sim \text{Uniform}\{2, 3, 4\}$ equal intervals, with $f(X) \sim \text{Uniform}\{0, 1, \ldots, 10\}$ assigned to each interval. Additive noise $E_Y \sim \text{Uniform}\{-1, 0, 1\}$ was added to generate $Y$, with addition over $\mathbb{R}/10\mathbb{Z}$. The correlation coefficient was ensured to be greater than 0.2.
        \1 $M_{X \gets Y}$: \\
        $Y$ was generated from an $m_Y$-valued categorical distribution, and then $X$ was set to $X = 2Y+3\sin{Y}+E_Y$, where $m_Y$ was generated from uniform distribution $\text{Uniform}\{2, 3, \ldots, 8\}$ and $E_Y \sim \mathcal{N}(0, 1)$, with addition taken over $\mathbb{R}/20\mathbb{Z}$.
\end{outline}
\paragraph{Continuous  Case:}
\begin{outline}    
        \1 $M_{X \Perp Y}$: \\
        $X$ and $Y$ were independently generated from Gaussian distributions.
        \1 $M_{X \gets C \to Y}$: \\
        $X$ and $Y$ were generated based on an ellipse equation with eccentricity $e$ and semi-major axis $a$: $r = \frac{a(1 - e^2)}{1 + e\cos(\phi)}$ with $0 \leq \phi < 2\pi$ In this process, we sampled $e$ from $\mathcal{U}(0.5, 0.9)$ and $a$ from $\text{Uniform}\{1, 2, 3\}$. $X$ and $Y$ were then set to $X = r\cos(\phi+\eta) + E_X, Y = r\sin(\phi+\eta) + E_Y (0 \leq \phi \leq 2 \pi)$ , where $\eta$ were sampled from $\mathcal{U}(\pi/4, \pi/3) $, and $ E_X$ and $E_Y$ follow $ \mathcal{N}(0, (0.1a)^2)$. 
        \1 $M_{X \to Y}$: \\
        $X$ was generated from a probability density function of a mixture Gaussian distribution with three clusters. Then $Y$ was set to $Y = a * \text{disc}(X, m_X) + b + E_Y$, where $m_X \sim \text{Uniform}\{2, 3, 4\}, a \sim \mathcal{U}(4, 7), b \sim \mathcal{U}(1, 5), E_Y \sim \mathcal{N}(0, 1)$, with addition taken over $\mathbb{R}/20\mathbb{Z}$. The procedure for the $M_{X\gets Y}$  was analogous.

\end{outline}

\begin{table}[ht]
\centering
\caption{Results on experiment 1\label{exp1_table}}
\let\cline\clineorig
\renewcommand{\arraystretch}{1.4}
\begin{tabular}{|l|c|c|c|c|c|c|c|c|c|c|c|c|}
\hline
\multirow{2}{*}{$n$} & \multicolumn{3}{c|}{\(M_{X \Perp Y}\)} & \multicolumn{3}{c|}{\(M_{X \gets C \to Y}\)} & \multicolumn{3}{c|}{\(M_{X \to Y}\)} & \multicolumn{3}{c|}{\(M_{X \gets Y}\)}\\
\cline{2-13}
& disc. & mix. & cont. & disc. & mix. & cont. & disc. & mix. & cont. & disc. & mix. & cont. \\
\hline
$10^2$ & 95.1 & 91.9 & 90.4 & 85.0 & 96.1 & 95.2 & 24.6 & 91.4 & 74.8 & 26.8 & 77.4 & 75.4 \\
$10^3$ & \textbf{100} & 99.8 & 96.3 & 87.6 & \textbf{100} & \textbf{100} & 99.6 & \textbf{100} & 98.4 & 99.9 & 99.4 & 98.9\\
$10^4$ & \textbf{100} & \textbf{100} & \textbf{100} & \textbf{100} & \textbf{100} & \textbf{100} & \textbf{100} & \textbf{100} & 99.7 & \textbf{100} & \textbf{100} & 99.9 \\
\hline
\end{tabular}
\end{table}

Table~\ref{exp1_table} shows the transition of accuracy as a fraction of correct inference with sample size $n$. Figures \ref{fig:exp1_discrete_cm}, \ref{fig:exp1_mixed_cm}, and \ref{fig:exp1_continuous_cm} visualize the inference results of \textsf{CLOUD} as confusion matrices for each data type. These indicate that accuracy improves as $n$ increases. In all cases, the accuracy reaches $\sim$100\% at $n=10000$.
 We thus empirically observed the consistency of \textsf{CLOUD}.

\begin{figure}[ht]
 \centering
 \includegraphics[width=\linewidth]{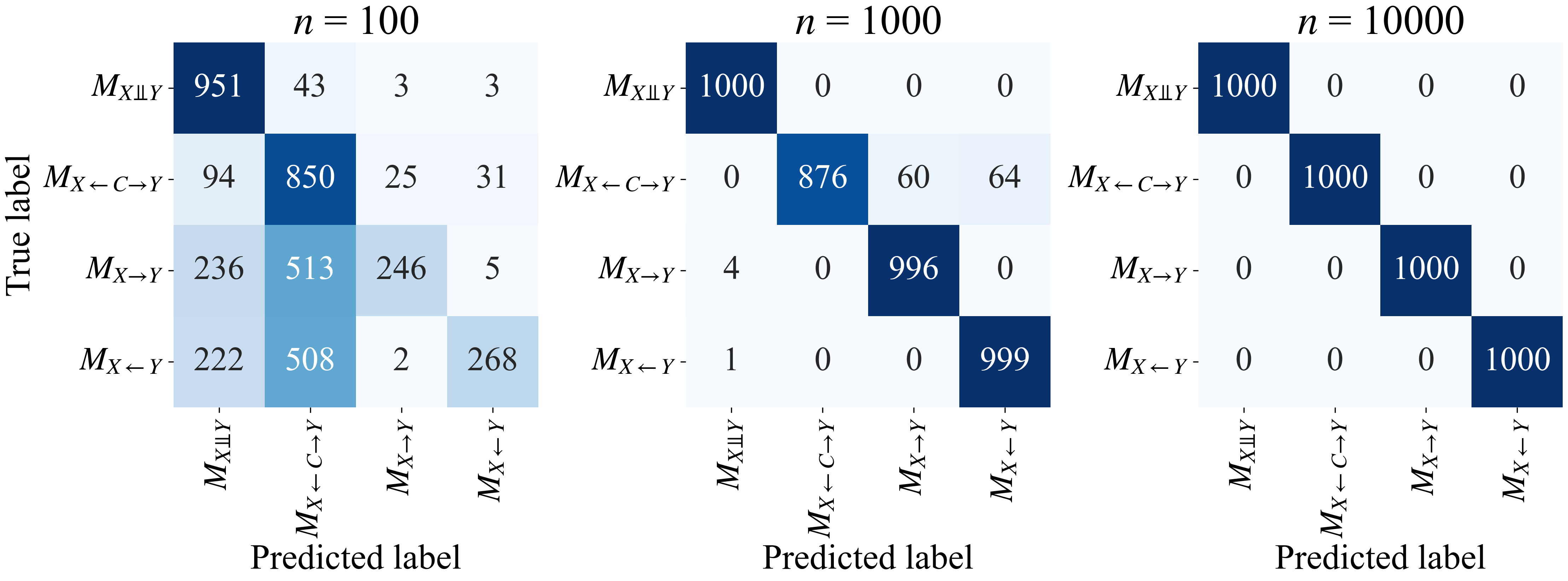}
 \caption{Confusion matrices in the Discrete Case of Experiment 1}
 \label{fig:exp1_discrete_cm}
\end{figure}

\begin{figure}[ht]
 \centering
 \includegraphics[width=\linewidth]{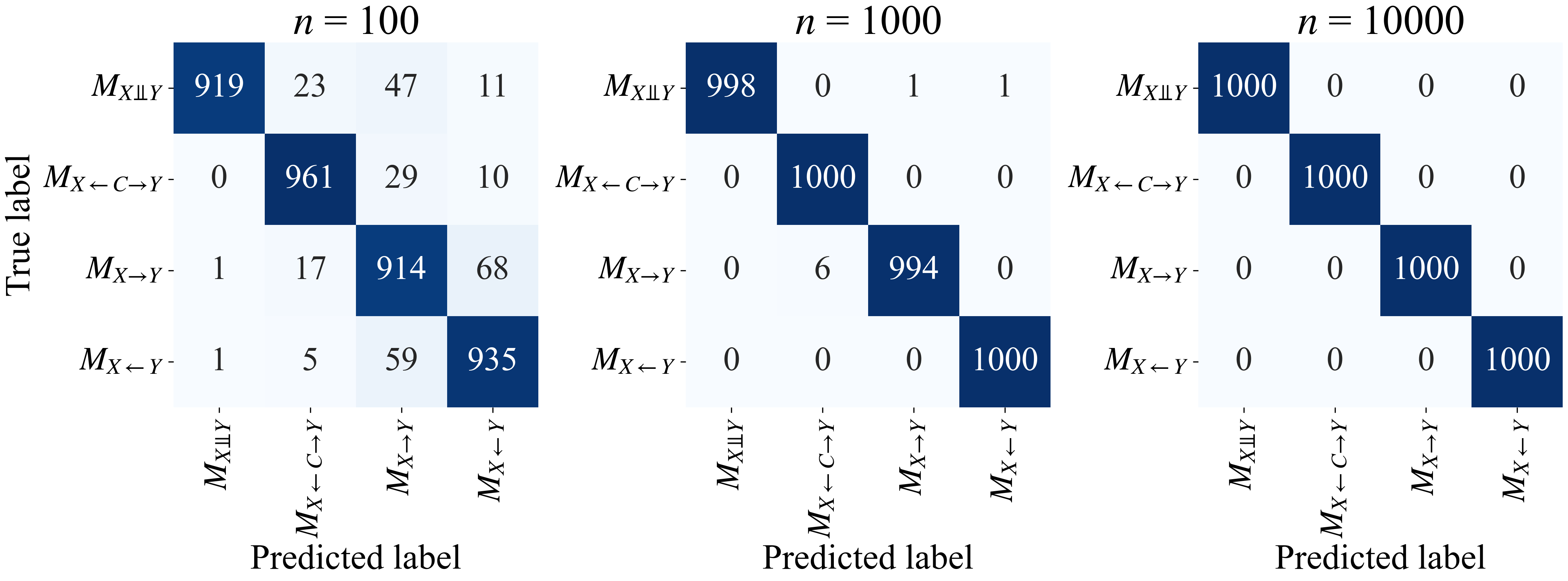}
 \caption{Confusion matrices in the Mixed Case of Experiment 1}
 \label{fig:exp1_mixed_cm}
\end{figure}

\begin{figure}[ht]
 \centering
 \includegraphics[width=\linewidth]{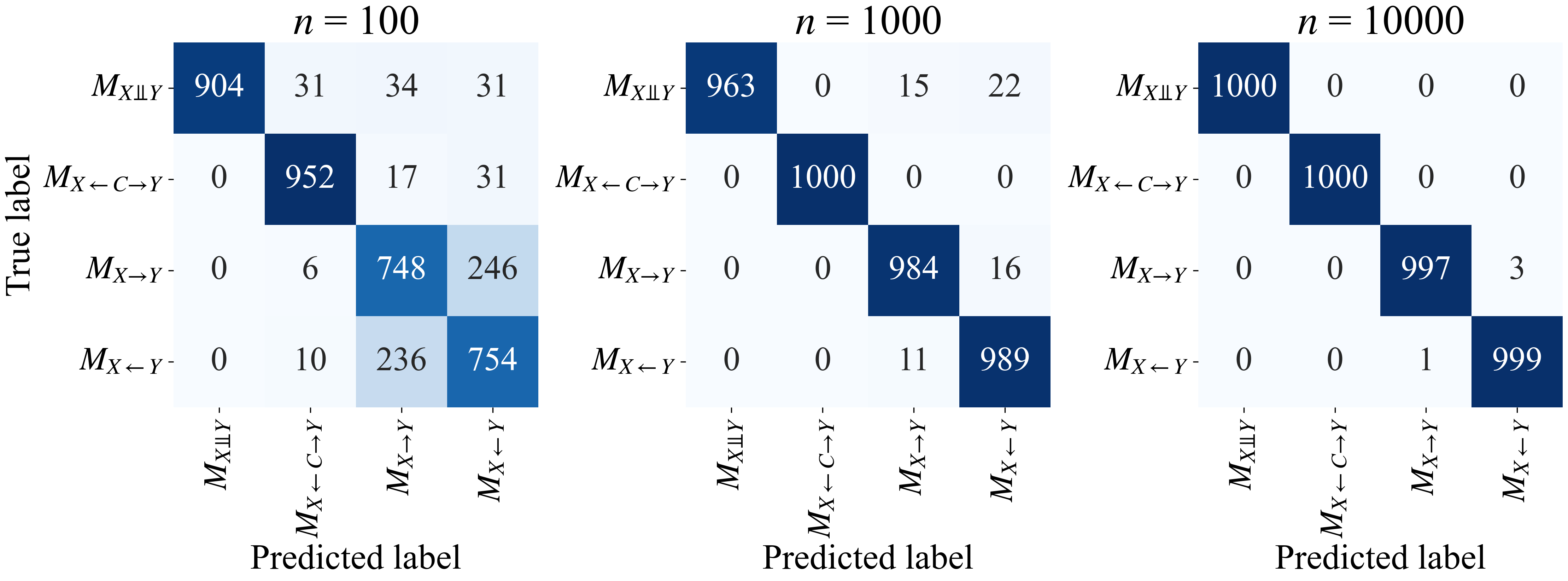}
 \caption{Confusion matrices in the Continuous Case of Experiment 1}
 \label{fig:exp1_continuous_cm}
\end{figure}

\textsf{CLOUD} calculates the codelength of the observed data for each causal model and selects the one that achieves the shortest codelength.
Therefore, we can expect that our method is more confident in its inference when the difference in the codelengths between the shortest one and the rest is larger.
The next experiment examined whether the difference in the codelengths per sample size of the shortest and the second shortest model, denoted as $\Delta$, can be interpreted as the confidence of \textsf{CLOUD}. We generated 1000 synthetic datasets from discrete causal models and calculated the accuracy at each decision rate $d$.
The accuracy at decision rate $d$ is defined as the accuracy at the upper $d$\% of datasets when datasets are sorted in descending order of $\Delta$. 
The result is shown in Fig~\ref{fig:dicision_rate}. For each model, the accuracy is higher when the decision rate is smaller, i.e., $\Delta$s are larger.
We thus conclude that $\Delta$ is interpreted as the confidence of the inference in \textsf{CLOUD}.

\begin{figure}[ht]
 \centering
 \includegraphics[width=0.6\linewidth]{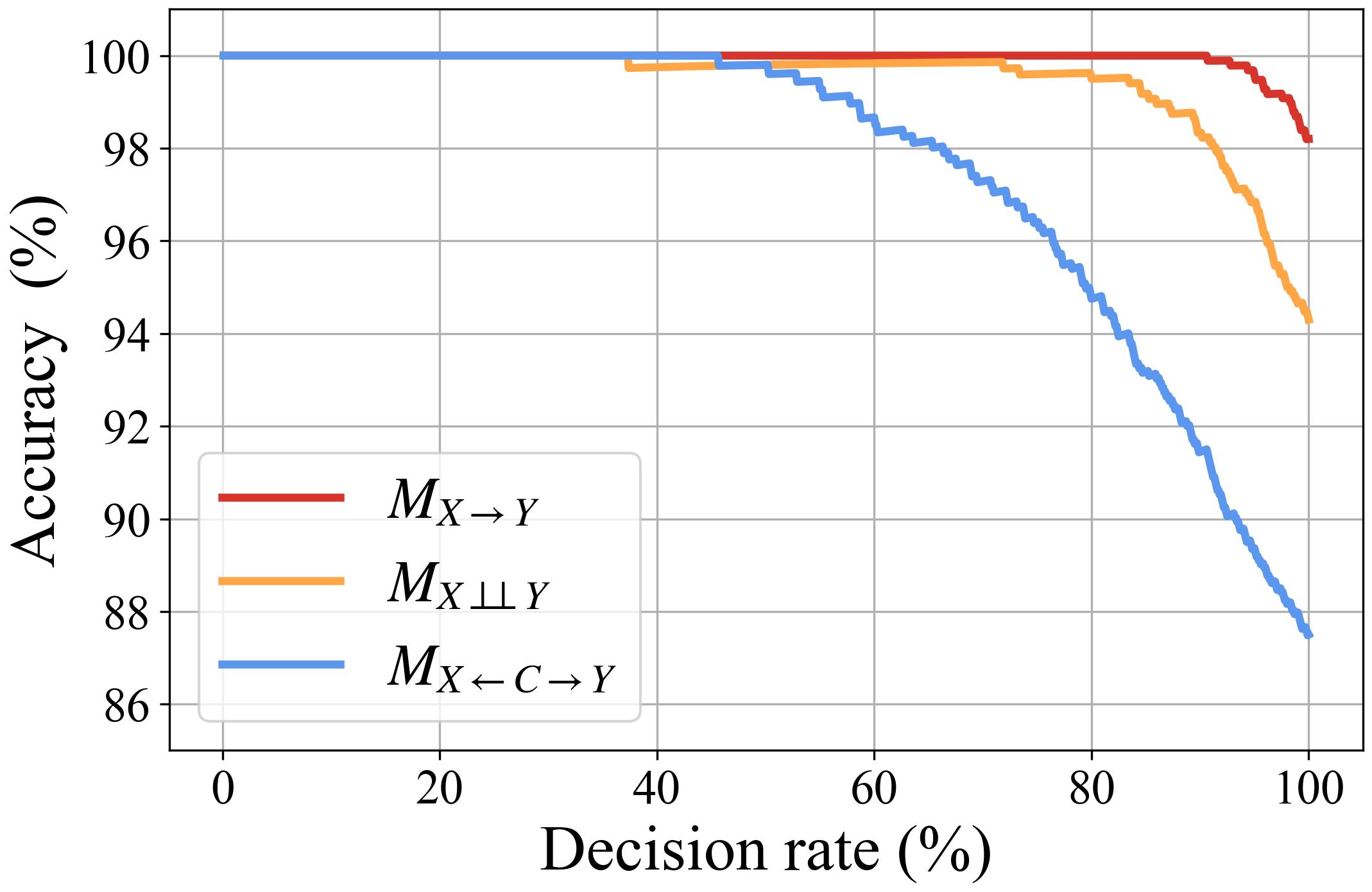}
 \caption{Accuracy vs. decision rate of \textsf{CLOUD} on synthetic data
 \label{fig:dicision_rate}}
\end{figure}

\subsection{Comparison to existing methods in case of $X \to Y$ and $X \gets C \to Y$}\label{subsec6-2}
In the second experiment, we compared the accuracy of out proposed method \textsf{CLOUD} with existing methods in two scenarios where the ground truth of the causal relationship is either $X \to Y$ or $X \gets C \to Y$. This comparison aimed to evaluate \textsf{CLOUD}'s performance in identifying the direction of causality and in detecting unobserved common causes. We set the sample size at $n=500$. For each SCM, synthetic datasets $z^{n=500}=x^{n=500} \times y^{n=500}$ were generated 100 times. The accuracy was determined by calculating the proportion of correctly identified causal relationships across these iterations.

To verify the cases where the true data-generating process violates the assumptions of our models, for $M_{X\to Y}$, we employ a non-cyclic ANM in which addition is taken without modulo operation:
\begin{outline}
    \1 Discrete Case \\
    $X $ and $E_Y$ were randomly and independently generated from categorical distributions, and $Y$ was set to $Y=f(X)+E_Y$ with a mapping function $f$ that was also randomly set.
    \1 Mixed Case [$X$ is continuous and $Y$ is discrete] \\
    $X$ was generated from $\mathcal{N}(0, 10^2)$, and then $X$ was divided into $m_X \sim \text{Uniform}\{2, 3, 4\}$ equal intervals. For each interval, a value $f(X) \sim \text{Uniform}\{1, 2, \ldots, 24\}$ was randomly assigned, and additive noise $E_Y \sim \text{Uniform}\{-1, 0, 1\}$ was added to generate $Y$.
    \1 Continuous Case \\
    $X$ was generated from a three-class mixture Gaussian distribution as in Experiment 1, and $Y$ was generated from $Y=af(X)+b\sin{2 \pi X}+E_Y$, where $a, b \sim \mathcal{U}([-2, -0.5] \cup [0.5, 2])$, and $E_Y \sim \mathcal{N}(0, 1)$. For the function $f(X)$, we considered two cases: one linear and the other a cubic function $x^3$.
\end{outline}

In the case where $M_{X \gets C \to Y}$ is the ground truth, the same process used in Experiment 1 was used.

For existing methods, we employed \textsf{ECI} \cite{kocaoglu2017entropic}, \textsf{DR} \cite{peters2011causal}, \textsf{CISC} \cite{budhathoki2017mdl}, \textsf{ACID} \cite{budhathoki2018accurate}, \textsf{LiM} \cite{zeng2022causal}, \textsf{HCM} \cite{li2022hybrid}, \textsf{LiNGAM} \cite{shimizu2011directlingam}, \textsf{ANM} \cite{peters2014causal}, \textsf{RCD} \cite{maeda2020rcd}, \textsf{CAMUV} \cite{maeda2021causal}, \textsf{BUPL} \cite{tashiro2014parcelingam}, and \textsf{COCA} \cite{kaltenpoth2019we}. These methods are categorized into those that can detect $M_{X \gets C \to Y}$ and those that cannot. While \textsf{ECI}, \textsf{DR}, \textsf{CISC}, \textsf{ACID}, \textsf{LiM}, \textsf{HCM}, \textsf{LiNGAM}, and \textsf{ANM} distinguish between $M_{X \to Y}$ and $M_{X \gets Y}$, \textsf{RCD}, \textsf{CAMUV}, and \textsf{BUPL} infer a causal model from a set of model candidates including $M_{X \gets C \to Y}$. Moreover, \textsf{COCA} selects a model only from $\Mcal = \lbrace M_{X \to Y}, M_{X \gets C \to Y} \rbrace$. We note that our proposed method \textsf{CLOUD} as well as \textsf{LiM} and \textsf{HCM} can accept all type of data, wheres others specialize for either discrete or continuous data.

We utilized the implementations of \textsf{HCM} by Li et al. (2022), \textsf{COCA} by Kaltenpoth et al. (2019), and others by Ikeuchi et al. (2023). Default hyper parameter values were used.

Results are shown in Table \ref{ExpComparizon}. Unlike existing methods, \textsf{CLOUD} is applicable to all experimental conditions. In particular, \textsf{CLOUD} is the first method capable of detecting unobserved common causes in discrete and mixed cases. As Table \ref{ExpComparizon} demonstrates, \textsf{CLOUD} showed consistently high inference accuracy across all cases, regardless of the data type, even though the number of model candidates of \textsf{CLOUD} is as many as 4 models. Especially, \textsf{CLOUD} outperformed previous methods in the discrete case.

\begin{table}[ht]
\centering
\caption{Results on experiment 2
\label{ExpComparizon}}
\let\cline\clineorig
\renewcommand{\arraystretch}{1.4}
\begin{tabular}{|l|c|c|c|c|c|c|c|c|c|c|c|c|}
\hline
\multirow{2}{*}{Methods} & \multicolumn{4}{c|}{Direct Case} & \multicolumn{3}{c|}{Confounded Case} & \multirow{2}{*}{\(|\mathcal{M}|\)}  & \multirow{2}{*}{\(C\)}\\ \cline{2-8}
& disc. & mix. & \multicolumn{2}{c|}{cont.} & disc. & mix. & cont. &  & \\ \cline{4-5}
& & & linear & cubic & & & & & \\
\hline
\textsf{ECI} & 89\% & - & - & - & - & - & - & 2 & - \\
\textsf{DR} & 85\% & - & - & - & - & - & - & 2 & - \\
\textsf{CISC} & 96\% & - & - & - & - & - & - & 2 & - \\
\textsf{ACID} & 92\% & - & - & - & - & - & - & 2 & - \\
\textsf{LiM} & 86\% & 12\% & 82\% & 87\% & - & - & - & 3 & - \\
\textsf{HCM} & 90\% & \textbf{100\%} & 85\% & \textbf{100\%} & - & - & - & 3 & - \\
\textsf{LiNGAM} & - & - & 95\% & 61\% & - & - & - & 3 & -  \\
\textsf{ANM} & - & - & 91\% & \textbf{100\%} & - & - & - & 2 & - \\ \hline
\textsf{RCD} & - & - & 48\% & 5\% & - & - & 96\% & 4 & \(\ast\) \\ 
\textsf{CAMUV} & - & - & 92\% & 99\% & - & - & \textbf{100\%} & 4 & \(\ast\) \\
\textsf{BUPL} & - & - & 37\% & 8\%& - & - & \textbf{100\%} & 4 & \(\ast\)\\
\textsf{COCA} & - & - & 2\% & 34\% & - & - & 13\% & 2 & \(\ast\) \\
\textbf{\textsf{CLOUD}}& \textbf{98\%} & 99\% & \textbf{96\%} & 99\% & \textbf{88\%} & \textbf{100\%} & \textbf{100\%} & 4 & \(\ast\) \\
\hline
\end{tabular}
\vspace{.5ex}
{\raggedright Performance comparison of \textsf{CLOUD} against existing methods w.r.t. accuracy in the discrete case (disc.), mixed case (mix.) and continuous case (cont.) based on synthetic data generated either from direct case or confounded case. $| \mathcal{M} | $ column denotes the number of model candidates each method considers, and $C$ one represents whether each method allows for the existence of unobserved common causes or not ($\ast$: Yes, -: No) \par}
\end{table}

\subsection{Real World Data}\label{subsec6-3}

\subsubsection{Direct case: T\"{u}bingen Benchmark Pairs}\label{subsubsecC3a}

In this section, we examined the effectiveness of \textsf{CLOUD} of inferring direct causality in data generated from complex causal mechanisms by real-world datasets.

We employed datasets in various application fields from the Tübingen Cause-Effect Pairs Database \cite{mooij2016distinguishing}, which provides a collection of datasets for testing causal discovery methods. The database contains datasets with known ground truth to distinguish between cause and effect variables.
We note that the ground truth does not mean there are no unobserved counfounders in general, except for dataset No.101 which was explicitly generated in an unconfounded experimental environment. The statistical information of the datasets used in the experiments is shown in Table \ref{tab:stasTubingen}, and scatter plots for each data pair are presented in Figure \ref{fig:scatterTubingen}. Descriptions for each data pair are given in Appendix~\ref{secD1}.

\begin{table}[h]
\centering
\caption{Characteristics of Tübingen Cause-Effect-Pairs\label{tab:stasTubingen}}
\let\cline\clineorig
\renewcommand{\arraystretch}{1.5} 
\addtolength{\tabcolsep}{-1pt}
\begin{tabular}{lcccll}
\toprule
Dataset  & Data-type & $n$ & Ground Truth & $X$ & $Y$ \\ \midrule
No. 47   & disc. & 254 & $X \gets Y$ & \text{\scriptsize number of cars}  & \text{\scriptsize working days or not} \\ \hline
No. 68   & disc. & 498 & $X \gets Y$ & \text{\scriptsize bytes sent at minute}  & \text{\scriptsize open http connections} \\ \hline
No. 107   & disc. & 240 & $X \to Y$ & \text{contrast}  & \text{\scriptsize answer correct or not} \\ \hline
No. 85   & mix. & 994 & $X \to Y$ & \text{day}  & \text{\tiny protein content of the milk} \\ \hline
No. 95   & mix. & 9504 & $X \to Y$ & \text{\scriptsize hour of the day}  & \text{\tiny total electricity consumption} \\ \hline
No. 99   & mix. & 2287 & $X \gets Y$ & \text{\tiny language test score}  & \text{\scriptsize social-economic status} \\ \hline
No. 23   & cont. & 452 & $X \to Y$ & \text{age}  & \text{weight} \\ \hline
No. 77   & cont. & 8401 & $X \gets Y$ & \text{\tiny daily average temperature}  & \text{\scriptsize solar radiation} \\ \hline
No. 101   & cont. & 300 & $X \to Y$ & \text{\scriptsize grey value of a pixel}  & \text{\scriptsize light intensity} \\ \hline
\bottomrule
\end{tabular}
\end{table}

\begin{figure}[h]
 \centering
 \includegraphics[width=0.9\linewidth]{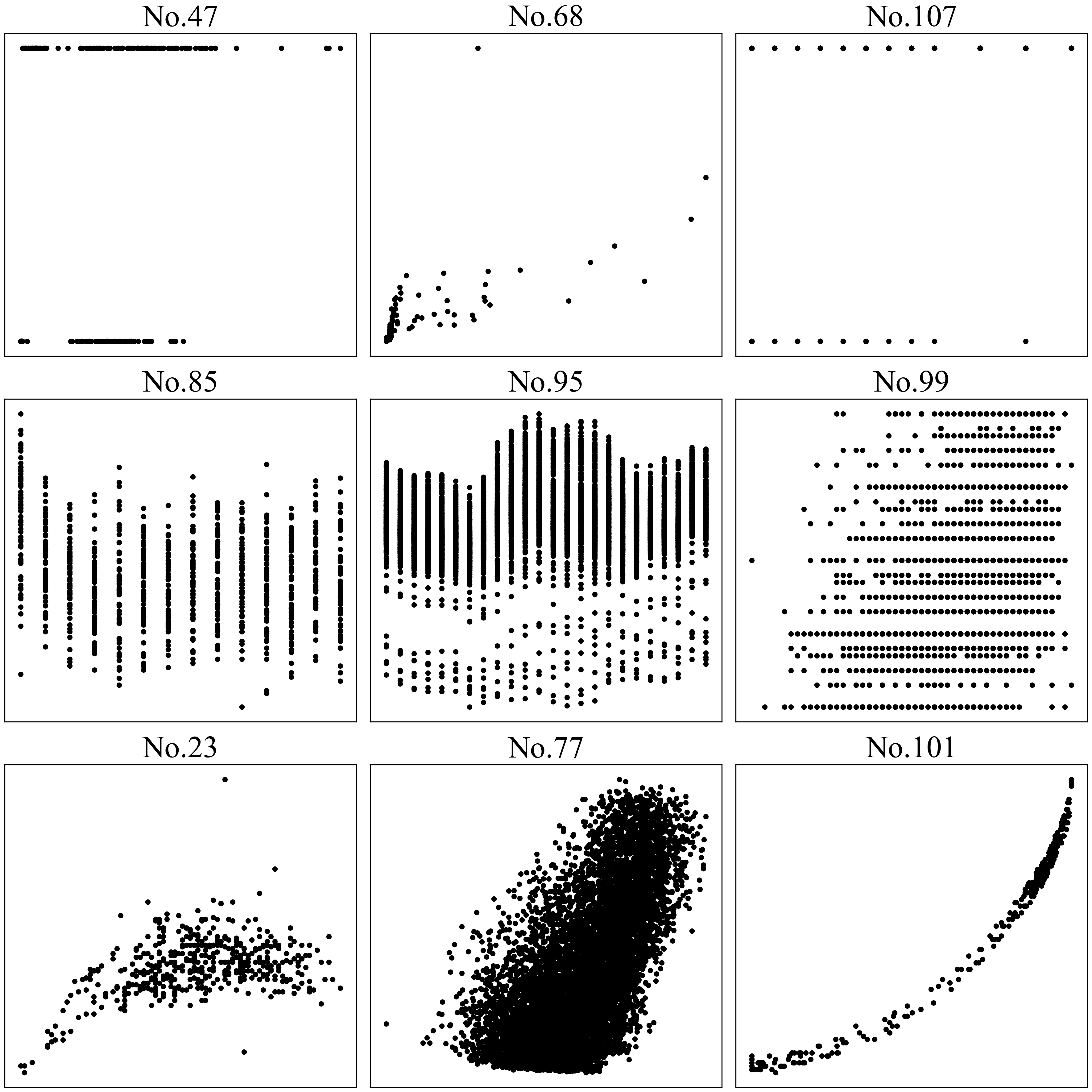}
 \caption{Scatter plots of the Tübingen Cause-Effect Pairs. The horizontal axis represents $X$, while the vertical axis represents $Y$. Each plot corresponds to a dataset pair.
 \label{fig:scatterTubingen}}
\end{figure}

We determined the data type for each data pair based on the information of the variables and run applicable causal discovery methods. Results are shown in Table \ref{tab:resultTugingen}. \textsf{CLOUD} demonstrated an excellent ability to determine the causal directions across various data types. Notable, it correctly identified the causal directions in every case both in mixed and continuous cases. While $\textsf{LiM}$ and $\textsf{HCM}$ are applicable across all cases, \textsf{CLOUD} achieved the highest number of correct answers.

In discrete case, \textsf{CLOUD} showed performance comparable to \textsf{CISC}, a state-of-the-art causal discovery method for discrete data. \textsf{CLOUD} also correctly inferred the right directions with high confidence (large $\Delta$) for cases No.47 and No.68, where \textsf{LiM} and \textsf{HCM} were incorrect. Notably, in case No.107, \textsf{CLOUD} detected the presence of a confounding factor (indicated as ‘conf’) rather than a clear causal direction. This is a crucial feature in real-world data analysis involving potential confounding variables. 

In continuous case, \textsf{CLOUD} successfully identified the causal directions across all pairs, despite having four model candidates. In contrast, \textsf{RCD}, \textsf{CAMUV}, and \textsf{BUPL}, which also have $|\Mcal|=4$, frequently resulted in the ‘conf’ (confounded) classification. This tendency suggests a bias in these methods towards indicating confoundedness rather than directly identifying causality. Particularly, it would be incorrect to conclude the presence of confounders in dataset No.101 mentioned above.

Overall, the results from the T\"{u}bingen Benchmark Pairs suggests that $\textsf{CLOUD}$ is a reliable causal discovery method to identify the true causal direction, particularly in the situations involving complex data-generating process across all data types.

\begin{table}[h]
\centering
\caption{Results on T\"{u}bingen Benchmark Pairs Dataset
\label{tab:resultTugingen}}
\let\cline\clineorig
\renewcommand{\arraystretch}{1.4}
\addtolength{\tabcolsep}{-5pt}
\begin{tabular}{|p{1.2cm}|p{1.2cm}<{\centering}|p{1.2cm}<{\centering}|p{1.2cm}<{\centering}|p{1.2cm}<{\centering}|p{1.2cm}<{\centering}|p{1.2cm}<{\centering}|p{1.2cm}<{\centering}|p{1.2cm}<{\centering}|p{1.2cm}<{\centering}|p{1.2cm}<{\centering}|p{1.2cm}<{\centering}|p{1.2cm}<{\centering}|}
\hline
\multirow{2}{*}{Methods} & \multicolumn{3}{c|}{Discrete Case} & \multicolumn{3}{c|}{Mixed Case} & \multicolumn{3}{c|}{Continuous Case}\\
\cline{2-10}
& No.47 & No.68 & No.107 & No.85 & No.95 & No.99 & No.23 & No.77 & No.101 \\
\hline
\textsf{ECI} & $\checkmark$ & $\times$ & $\times$ & - & - & - & - & - & - \\
\textsf{DR} & $\thickapprox$ & $\thickapprox$ & $\thickapprox$ & - & - & - & - & - & -\\
\textsf{CISC} & $\checkmark$ & $\checkmark$ & $\times$ & - & - & - & - & - & - \\
\textsf{ACID} & $\times$ & $\times$ & $\thickapprox$ & - & - & - & - & - & -\\
\textsf{LiM} & $\times$ & $\times$ & $\times$ & $\checkmark$ & $\checkmark$ & $\checkmark$ & $\times$ & $\times$ & $\times$\\
\textsf{HCM} & $\times$ & $\times$ & $\checkmark$ & $\checkmark$ & $\checkmark$ & $\times$ & $\checkmark$ & $\checkmark$ & $\checkmark$ \\
\textsf{LiNGAM} & - & - & - & - & - & - & $\times$ & $\checkmark$ & $\checkmark$ \\
\textsf{ANM} & - & - & - & - & - & - & $\checkmark$ & $\checkmark$ & $\checkmark$\\ \hline
\textsf{RCD} & - & - & - & - & - & - & \text{conf} & \text{conf} & \text{conf}\\
\textsf{CAMUV} & - & - & - & - & - & - & $\checkmark$ & \text{conf} & \text{conf}\\
\textsf{BUPL} & - & - & - & - & - & - &\text{conf} & \text{conf} & \text{conf}\\
\textsf{COCA} & - & - & - & - & - & - & $\checkmark$ & $\checkmark$ & $\checkmark$ \\
\textbf{\textsf{CLOUD}} &  $\pmb{\checkmark}$ \newline \resizebox{1.2cm}{!}{$(\Delta = 0.16)$} & $\pmb{\checkmark}$ \newline \resizebox{1.2cm}{!}{$(\Delta = 1.5)$} & \textbf{conf} \newline \resizebox{1.2cm}{!}{$(\Delta = 0.01)$} & $\pmb{\checkmark}$ \newline \resizebox{1.2cm}{!}{$(\Delta = 0.03)$} & $\pmb{\checkmark}$ \newline \resizebox{1.2cm}{!}{$(\Delta = 0.03)$} & $\pmb{\checkmark}$ \newline \resizebox{1.2cm}{!}{$(\Delta = 0.09)$} & $\pmb{\checkmark}$ \newline \resizebox{1.2cm}{!}{$(\Delta = 0.10)$} & $\pmb{\checkmark}$ \newline \resizebox{1.2cm}{!}{$(\Delta = 0.21)$} & $\ \pmb{\checkmark}$ \newline \resizebox{1.2cm}{!}{$(\Delta = 0.12)$} \\
\hline
\end{tabular}
\vspace{.5ex}
{\raggedright $\checkmark$ indicates that a method inferred the true causal direction. $\times$ indicates that the output of a method was wrong direction. $\thickapprox$ indicates that a method drew undisicive conclusion. \textbf{conf} indicates that a method inferred $M_{X \gets C \to Y}$. \par}
\end{table}

\subsubsection{Confounded case: SOS DNA Repair Network Dataset}\label{subsubsecC3b}
Finally, we tested \textsf{CLOUD}'s ability of detecting latent confounding variables in complex causal relationships using SOS DNA repair network in E.coli~\cite{ronen2002assigning}. This dataset describes the causal relationships at the protein level between genes, consisting of measurements for eight different genes under four distinct ultraviolet radiation conditions, with a total sample size of $n=200$. A ground truth network, as established by \cite{perrin2003gene}, is depicted in Fig. \ref{fig:SOS_DNA_repair_network}. 
The gene lexA has causal influence on all other genes, creating a situation where at least lexA is an unobserved common cause among variables downstream (children) of lexA. Therefore, we randomly selected pairs of child nodes of lexA and conducted experiments to detect the presence of an unobserved common cause (lexA) between each pair. We also note that while the experimental setup correctly represent confounded case, the correct directed cases could not be extracted, such as the arrow from lexA to umuDC. This is because there might be common causes between the two.

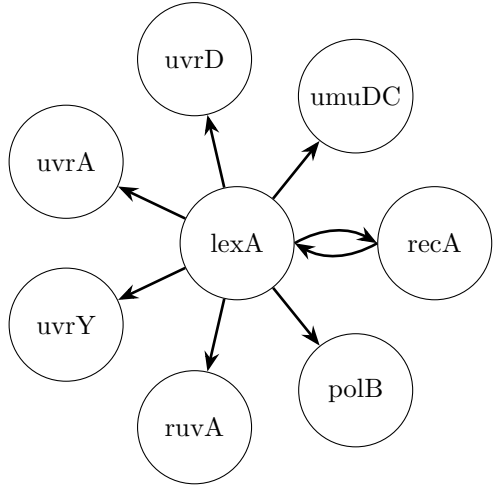
\begin{figure}[h]
    \centering
        \begin{tikzpicture}[>=Stealth, node distance=2cm]
        \node (lexA) at (0.00, 0.00) [circle, draw, minimum size=1.5cm] {lexA};
        \node (recA) at (2.60, 0.00) [circle, draw, minimum size=1.5cm] {recA};
        \node (umuDC) at (1.56, 1.95) [circle, draw, minimum size=1.5cm] {umuDC};
        \node (uvrD) at (-0.56, 2.44) [circle, draw, minimum size=1.5cm] {uvrD};
        \node (uvrA) at (-2.25, 1.08) [circle, draw, minimum size=1.5cm] {uvrA};
        \node (uvrY) at (-2.25, -1.08) [circle, draw, minimum size=1.5cm] {uvrY};
        \node (ruvA) at (-0.56, -2.44) [circle, draw, minimum size=1.5cm] {ruvA};
        \node (polB) at (1.56, -1.95) [circle, draw, minimum size=1.5cm] {polB};
        \draw [->, line width=1pt] (lexA) -- (umuDC);
        \draw [->, line width=1pt] (lexA) -- (uvrD);
        \draw [->, line width=1pt] (lexA) -- (uvrA);
        \draw [->, line width=1pt] (lexA) -- (uvrY);
        \draw [->, line width=1pt] (lexA) -- (ruvA);
        \draw [->, line width=1pt] (lexA) -- (polB);
        \draw [->, line width=1pt] (lexA.east) to[bend left] (recA.west);
        \draw [->, line width=1pt] (recA.west) to[bend left] (lexA.east);
        \end{tikzpicture}
        \caption{Ground truth graph of SOS DNA repair network}
        \label{fig:SOS_DNA_repair_network}
\end{figure}

\begin{figure}[h]
 \centering \includegraphics[width=1.01\linewidth]{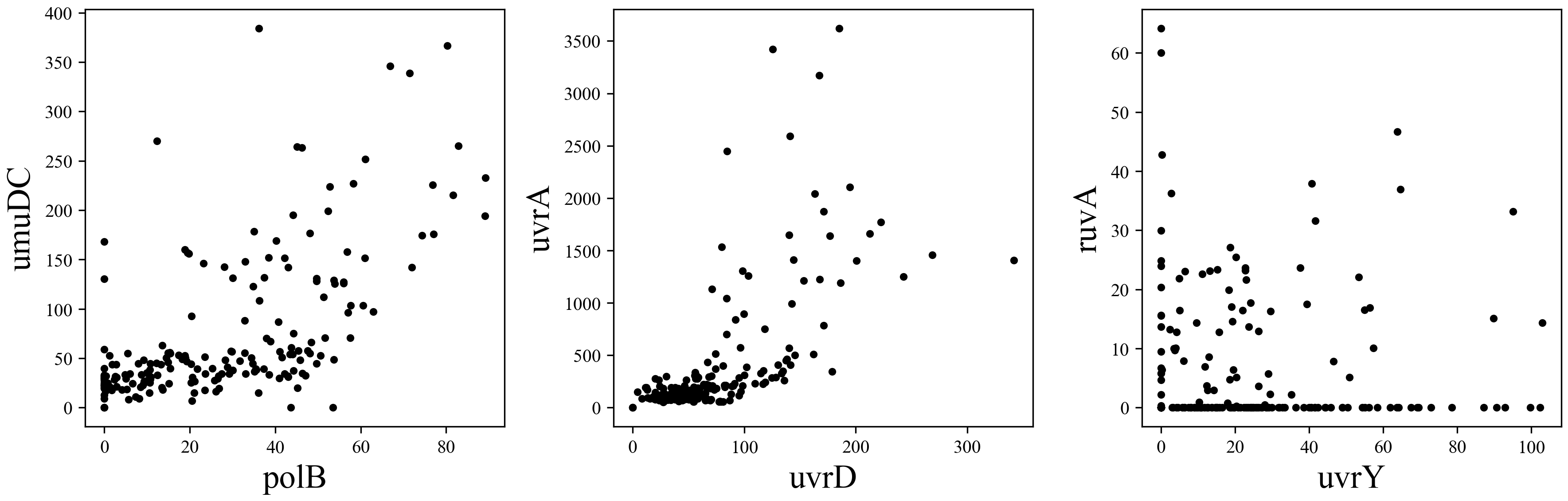}
 \caption{Scatter plots of variable pairs from the SOS DNA repair network dataset
 \label{fig:scatterSOSNetwork}}
\end{figure}

For comparison, we employed \textsf{RCD}, \textsf{CAMUV}, \textsf{BUPL}, and \textsf{COCA}, which are capable of inferring $M_{X \gets C \to Y}$. Since \textsf{COCA} considers asymmetry between $X$ and $Y$, we conducted experiments in two settings: one with $\Mcal = \lbrace M_{X \to Y}, M_{X \gets C \to Y} \rbrace$ (referred to as \textsf{COCA}($X \to Y$)) and another with $\Mcal = \lbrace M_{X \gets Y}, M_{X \gets C \to Y} \rbrace$ (referred to as \textsf{COCA}($X \gets Y$)).

The results are shown in Table \ref{tab:resultSOS}. For the pairs (polB, umuDC) and (uvrD, uvrA), the methods \textsf{CLOUD}, \textsf{RCD}, and \textsf{CAMUV} successfully identified the presence of unobserved common causes. However, for the pair (uvrY, ruvA), only \textsf{COCA} detected the unobserved common cause. We observed that \textsf{CLOUD} inferred causal independence for this pair, likely due to ruvA exhibiting zero-inflation and the statistical independence of the pair.

\textsf{CLOUD} demonstrated performance comparable to existing state-of-the-art methods that allow for unobserved common causes.

Given the prevalence of unobserved confounders in real-world applications, which can lead to incorrect causal conclusions if not properly addressed, we can conclude that \textsf{CLOUD} is equipped with a crucial feature for handling such scenarios. Moreover, considering the results from the Tübingen Benchmark Pairs as well, as presented in Table \ref{tab:resultTugingen}, we further affirm that \textsf{CLOUD} is a reliable method for real-world data analysis.

\begin{table}[h]
\centering
\caption{Results on SOS DNA repair network}
\label{tab:resultSOS}
\let\cline\clineorig
\renewcommand{\arraystretch}{1.4}
\addtolength{\tabcolsep}{-1pt}
\begin{tabular}{l>{\centering\arraybackslash}p{2cm}cccccc}
\toprule
Ground Truth    & \textbf{\textsf{CLOUD}}               &  \textsf{RCD}                &  \textsf{CAMUV}            &  \textsf{BUPL}     &  \textsf{COCA($X \to Y$)}      & \textsf{COCA($X \gets Y$)}\\ \midrule
polB $\gets C \to $ umuDC & \pmb{\checkmark} \par \resizebox{1.2cm}{!}{$(\Delta = 0.21)$} & $\checkmark$ & $\checkmark$ & $\checkmark$ & $\times$ & $\times$ \\ \hline
uvrD $\gets C \to $ uvrA  & \pmb{\checkmark} \par \resizebox{1.2cm}{!}{$(\Delta = 0.013)$}  & $\checkmark$ & $\checkmark$ & $\checkmark$ & $\times$ & $\times$ \\ \hline
uvrY $\gets C \to $  ruvA  & $\times$ \par \resizebox{1.2cm}{!}{$(\Delta = 0.25)$}   & $\times$              & $\times$            & $\times$ & $\checkmark$ & $\checkmark$ \\ \bottomrule
\end{tabular}
\vspace{.5ex}
{$\checkmark$ indicates that a method inferred $M_{X \gets C \to Y}$, while $\times$ indicates that a method did not.\par}
\end{table}

\section{Conclusion}\label{sec7}
This paper proposed \textsf{CLOUD}, a novel causal discovery method for causal relationships between two variables with unobserved common causes across discrete, mixed, and continuous data types. 

Based on the Reichenbach's common cause principle, we defined the Reichenbach problem as a problem to statistically infer the causal relationships among four models: one independent model, one with unobserved common cause, and two models with direct causality. 

By employing the NML code, \textsf{CLOUD} offers a model selection approach to solve the Reichenbach problem without relying on assumptions about unobserved common causes.
\textsf{CLOUD} formulates four models for each causal relationship and data-type. In particular, \textsf{CLOUD} expresses all joint distributions of $X$ and $Y$ under $M_{X \gets C \to Y}$, which enables us to avoid explicitly modeling $C$. \textsf{CLOUD} calculates the NML-based codelength of the observational data under those four models and then infers a causal relationship by selecting the corresponding causal model that achieves the shortest codelength. 
We successfully extended the \textsf{CLOUD} from discrete to continuous data, through discretization. \textsf{CLOUD} has a consistency with respect to selected models and it is theoretically proven.
 
Through both synthetic and real-world data experiments, \textsf{CLOUD} has proven its effectiveness in solving the Reichenbach problem with high accuracy and consistency. It stands out in its ability to identify causal directions with greater precision than existing methods, across a variety of data types and under complex data-generating conditions. Additionally, \textsf{CLOUD} has demonstrated a strong performance in detecting latent variables, showcasing its robustness and reliability in causal discovery. 

However, challenges remain, as evidenced by its performance on zero-inflated data in our final experiment where \textsf{CLOUD} mistakenly determined they are causally independent.
This implies the applicable range of \textsf{CLOUD} is still restricted, despite its ability to detect unobserved common causes without additional assumptions on it. 
Moreover, Additive Noise Model which we assume has known to be vulnerable in handling data with heteroscedastic noise, i.e noise variances are dependent of observed variables unlike \textsf{ANM}'s assumption. Recent research is actively addressing these challenges~\cite{tagasovska2020distinguishing, xu2022inferring, choi2023model}. 

Future work aims to broaden \textsf{CLOUD}'s scope, addressing its current SCM assumptions to enhance robustness and applicability across diverse data scenarios.

\bibliography{sn-article}
\bibliographystyle{sn-basic}

\begin{appendices}
\section{Optimization of $f$ and $g$ through Likelihood Maximization}\label{secA1}
The algorithm for estimating the function $\hat{f}$ is shown in the 
Algorithm~\ref{alg:optimize_regression}.
In order to compute both terms in Eq.~\eqref{eq:disc_x2y_two_part_code}, we must estimate $\hat{f}\colon\Xcal\to \Ycal$ that achieves a higher likelihood to calculate a shorter codelength of data under $M_{X \to Y}$.  
First, we initialize the function $\fhat$ that returns the most frequent $y$ for each $x\in\Xcal$ (lines 1-3). Subsequently, we iteratively update the function value for $x \in \mathcal{X}$. At $j$-th step, we update the function value $f(x)$ while fixing all other mapping $f(x')$ for $x' \neq x$ and check if the likelihood increases. If it increases, we change the value $f(x)$ to new $y$. 
This update is done for all $x \in \Xcal$ and repeated until the likelihood no longer increases or for at most $J$ times. 
Similarly, the function $g: \Ycal \to \Xcal$ can be estimated by replacing $X$ and $Y$ in algorithm~\ref{alg:optimize_regression}.

\begin{algorithm}[t]                      
\caption{Optimize Regression Function $f: \Xcal \to \Ycal$ with Likelihood Maximization}
\label{alg:optimize_regression}                          
\begin{algorithmic}[1]
\Require $ z^n \in \mathcal{X}^n \times \mathcal{Y}^n\,$: Dataset with sample size of $n$ \\  $\quad\ J$ : Maximum number of update iterations  
\Ensure $\hat{f}\colon \Xcal \to \Ycal$ : Estimated Regression Function 
\For {$x \in \mathcal{X}$}
    \State $f^{(0)}(x) \gets \argmax\limits_{y \in \mathcal{Y}} n(X=x, Y=y)$
\EndFor
\State $j \gets 0$
\State $r_{\mathrm{max}} \gets \max\limits_{\thetab \in \bm{\Theta}_{X\to Y} } P\!\left(z^n; M_{X \to Y} , f^{(0)}, \thetab \right)$
\While{$\mathit{converged}=\mathrm{False}$ \textbf{or} $j < J$}
    \State $j \gets j + 1$
    \State $\mathit{converged} \gets \mathrm{True}$
    \For {$x \in \mathcal{X}$}
        \State $r \gets \max\limits_{f^{(j-1)}(x)\in \Ycal} \max\limits_{\thetab \in \bm{\Theta}_{X\to Y} } P\!\left(z^n; M_{X \to Y}, f^{(j-1)}, {\thetab} \right)$ 
        \Comment{subject to $f^{(j-1)}$ is not constant}
        \If{$r > r_{\mathrm{max}}$}
            \State $r_{\mathrm{max}} \gets r$
            \State $f^{(j)}(x) \gets  \argmax\limits_{f^{(j-1)}(x)\in \Ycal} \max\limits_{\thetab \in \bm{\Theta}_{X\to Y} } P\!\left(z^n; M_{X \to Y}, f^{(j-1)}, \bm{\theta}\right)$ 
            \Comment{subject to $f^{(j-1)}$ is not constant}
            \State $\mathit{converged} \gets \mathrm{False}$
        \EndIf
    \EndFor
\EndWhile
\State \textbf{return $f^{(j)}$}
\end{algorithmic}
\end{algorithm}

\section{Proof of Proposition 1}\label{secB1}
In this section, we provide the proof for Proposition 1.
\subsection{Independent Case}\label{subsecB1}
We exactly calculate the stochastic complexity defined in Eq.~\eqref{stochastic_complexity} for causal models $M_{X \Perp Y} $:
\begin{align*}
 &\Lcal^{d}(z^n; M_{X \Perp Y}) \\
 &=\mathcal{SC}(z^n; M_{X \Perp Y}) \\
 &= - \log P(z^n; M_{X \Perp Y}, {\hat{\bm{\theta}}}_{X \Perp Y}\left(z^n\right))
  + \log  \mathcal{C}_{n}(M_{X \Perp Y})
\end{align*}

Using the result of likelihood estimation $P(X=k,Y=k'; M_{X \Perp Y}, {\hat{\bm{\theta}}}_{X \Perp Y}\left(z^n\right)) =  \frac{n(X = k)}{n}\frac{n(Y = k')}{n}$, the maximum likelihood which the first term on the right-hand side is represented as:
\begin{align*}
  \log P(z^n; M_{X \Perp Y}, {\hat{\bm{\theta}}}_{X \Perp Y}) =\sum_{k=0}^{m_X - 1} n(X = k) \log \frac{n(X = k)}{n}
    +\sum_{k'=0}^{m_Y - 1} n(Y = k') \log \frac{n(Y = k')}{n},
\end{align*}
where ${\hat{\bm{\theta}}}_{X \Perp Y}=(\hat{\theta}_{k, k'} )$ is the maximum likelihood estimator in $\bm{\Theta}_{{X \Perp Y}}$.
The parametric complexity for $M_{X \Perp Y}$ as the second term is calculated as
\begin{align*}
    & \log\mathcal{C}_{n}(M_{X \Perp Y}) \\
    &=\log \sum_{Z^n\in \Xcal^n \times \Ycal^n} \max_{\thetab \in \bm{\Theta}_{X \Perp Y}} P(Z^n;\, M_{X\Perp Y}, \thetab) \\
    &= \log \left\lbrace \sum_{Z^n} \prod_{k=0}^{m_X -1} \left(\frac{n(X=k)}{n}\right)^{n(X=k)} \prod_{k'=0}^{m_Y -1} \left(\frac{n(Y=k')}{n}\right)^{n(Y=k')} \right \rbrace\\
    &= \log \left\lbrace \sum_{X^n} \prod_{k=0}^{m_X -1} \left(\frac{n(X=k)}{n}\right)^{n(X=k)} \sum_{Y^n} \prod_{k'=0}^{m_Y -1} \left(\frac{n(Y=k')}{n}\right)^{n(Y=k')} \right \rbrace\\
    &= \log \mathcal{C}_{\mathsf{CAT}}(K=m_X, n) + \log \mathcal{C}_{\mathsf{CAT}}(K=m_Y, n).
\end{align*}
Therefore, the NML codelength of the data $z^n$ for $M_{X \Perp Y}$ is calculated as:
\begin{align*}
    \Lcal^{d}(z^n ; M_{X \Perp Y}) 
    &= \sum_{k=0}^{m_X - 1} n(X = k) \log \frac{n(X = k)}{n} + \log \mathcal{C}_{\mathsf{CAT}}(K=m_X, n) \\
    &\phantom{=}+\sum_{k'=0}^{m_Y - 1} n(Y = k') \log \frac{n(Y = k')}{n} + \log \mathcal{C}_{\mathsf{CAT}}(K=m_Y, n),
\end{align*}
which is equal to $\mathcal{SC}(z^n; \mathsf{CAT}^{m_X})$.

\subsection{Confounded Case}\label{subsecB2}
We exactly calculate the stochastic complexity defined in Eq.~\eqref{stochastic_complexity} for causal model $M_{X \gets C \to Y}$:
\begin{align*}
   &\Lcal^{d}(z^n; M_{X \gets C \to Y})\\
   &= \mathcal{SC}(z^n; M_{X \gets C \to Y})\\
  &= - \log P(z^n; M_{X \gets C \to Y}, {\hat{\bm{\theta}}}_{X \gets C \to Y}) \notag +\log  \mathcal{C}_{n}(M_{X \gets C \to Y}),
\end{align*}
where
${\hat{\bm{\theta}}}_{X \gets C \to Y}=(\hat{\theta}_{k, k'} )$ is the maximum likelihood estimator in
$\bm{\Theta}_{{X \gets C \to Y}}$.
Each element is $\hat{\theta}_{k, k'} = \frac{n\left(X=k, Y=k'\right)}{n}$, where
$n\left(X=k, Y=k'\right)$ counts the frequency of data satisfying $X=k$ and $Y=k'$ in $z^n$. Subsequently, the maximum likelihood as the first term on the right-hand side is represented as
\begin{align*}
   \log P(z^n; M_{X \gets C \to Y}, {\hat{\bm{\theta}}}_{X \gets C \to Y})  
    =\sum_{k=0}^{m_X - 1} \sum_{k'=0}^{m_Y - 1} n(X = k, Y=k') \log \frac{n(X = k, Y = k')}{n}.
\end{align*}
Since the causal model of $M_{X\gets C\to Y}$ is the model of $(m_Xm_Y)$-categorical distributions, 
the parametric complexity as the second term is calculated as 
\begin{align*}
     \log\mathcal{C}_{n}(M_{X \gets C \to Y}) =  \log\mathcal{C}_{\mathsf{CAT}}(m_Xm_Y, n).
\end{align*}

Thus, the NML codelength of the data $z^n$ for $M_{X \gets C \to Y}$ is calculated as:
\begin{align*}
&\Lcal^{d}(z^n; M_{X \gets C \to Y}) \\
&=-\sum_{k=0}^{m_X - 1} \sum_{k'=0}^{m_Y - 1} n\left(X = k, Y=k'\right) \log \frac{n\left(X = k, Y = k'\right)}{n} + \log  \mathcal{C}_{\mathsf{CAT}}(m_Xm_Y, n), 
\end{align*}
which is equal to $\mathcal{SC}(z^n; \mathsf{CAT}^{m_X m_Y}) = \mathcal{SC}(z^n; \mathsf{HIS}^{m_X, m_Y})$.

\subsection{Direct Case}\label{subsecB3}
First, we consider the first term on the right-hand side of Eq. \eqref{eq:disc_x2y_two_part_code}.
The first term, $L(f; M_{X \to Y})$, represents the codelength required to select one function from a finite set of possible functions $f \in \Fcal$, and the following holds true:
\begin{thm}
\label{th:function_code_length}
Let $\Fcal$ be a set of non-constant functions from $\Xcal$ to $\Ycal$. Then, we see 
\begin{align}
\label{eq:func_code}
    L(f; M_{X \to Y}) =  \log |\mathcal{F}| = \log (m_Y^{m_X - 1} - 1)
\end{align}
\end{thm}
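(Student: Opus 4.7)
The plan is to reduce the statement to a straightforward counting argument by uniform coding, once the right equivalence relation on $\mathcal{F}$ is identified. Since the model $M_{X \to Y}$ assigns to a given $f$ the set of joint distributions of the form $P(X)\cdot P(Y - f(X))$ with additive noise $E_Y$ on $\mathbb{Z}/m_Y\mathbb{Z}$, two functions $f$ and $f'$ that differ by a global additive constant $c\in\mathbb{Z}/m_Y\mathbb{Z}$, i.e. $f'(x)=f(x)+c$ for every $x\in\mathcal{X}$, induce exactly the same family of joint distributions: the shift is absorbed into $E_Y$ by the reparametrization $\theta^Y_{k'}\mapsto \theta^Y_{k'-c}$. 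Thus the effective object being coded is not a raw function but an equivalence class $[f]$ under the group action of $\mathbb{Z}/m_Y\mathbb{Z}$ by additive shift on the codomain.

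My second step would be to count these equivalence classes among non-constant $f\in\mathcal{F}$. The shift action of $\mathbb{Z}/m_Y\mathbb{Z}$ on the set of all functions $\mathcal{Y}^{\mathcal{X}}$ is free: if $f(x)+c = f(x)$ for all $x$ then $c=0$, regardless of whether $f$ is constant. Hence every orbit has size exactly $m_Y$, so the total number of orbits is $m_Y^{m_X}/m_Y = m_Y^{m_X-1}$. Constant functions form a single orbit (any constant shift of a constant is another constant). Excluding this orbit to match the definition $\mathcal{F}=\{f:\mathcal{X}\to\mathcal{Y}\mid f\text{ not constant}\}/\!\!\sim$ yields
\begin{equation*}
|\mathcal{F}| \;=\; m_Y^{m_X-1}-1.
\end{equation*}

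Finally, since the encoder needs only to transmit which equivalence class $[f]$ is being used as the regression function in the two-part code of Eq.~\eqref{eq:disc_x2y_two_part_code}, a uniform code over the $m_Y^{m_X-1}-1$ equivalence classes is optimal in the minimax sense when no prior is available, and it yields codelength $L(f;M_{X\to Y})=\log(m_Y^{m_X-1}-1)$. The main subtlety I anticipate is the justification of the equivalence relation: one must argue that identifying $f$ with $f+c$ is not merely a convenience but is forced by the identifiability structure of $M_{X\to Y}$, so that no extra bits must be spent to disambiguate representatives within an orbit. Symmetric reasoning applies to $g\in\mathcal{G}$, giving $L(g;M_{X\gets Y})=\log(m_X^{m_Y-1}-1)$.
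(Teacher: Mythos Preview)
Your proposal is correct and follows essentially the same approach as the paper: count functions modulo additive shifts in $\mathbb{Z}/m_Y\mathbb{Z}$ (which the paper phrases as distinct functions yielding the same NML codelength), observe each orbit has size $m_Y$, and exclude the single orbit of constant functions to obtain $m_Y^{m_X-1}-1$. The only cosmetic difference is the order of operations---the paper first subtracts the $m_Y$ constant functions and then divides by $m_Y$, whereas you divide first and then remove one orbit---but the arithmetic and the underlying idea are identical.
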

\begin{proof} 
Naively, the number of the possible functions $f$ amounts to $m_Y^{m_X}$, but one can remove redundant functions to shorten the resulting codelength.

First, one can remove constant functions since they are associated with the independence model $M_{X\Perp Y}$ and not $M_{X\to Y}$. $m_Y$ such functions exist in total.

Next, distinct functions are associated with the same NML codelength.
For a function $f_1$, consider $f_2$ given by a constant shift,
\begin{align*}
&f_2(x) = f_1(x)+k' \pmod{m_Y},
\end{align*}
where $k' \in \lbrace 1, \cdots , m_Y - 1\rbrace$ and $x \in \mathcal{X}$.
Now, for all $k'$, the NML codelength with $f_1$ is the same as that with $f_2$.
Since $m_Y$ such different but equivalent functions exist for any $f_1$ including itself, one can further reduce the number of functions by a factor of $m_Y$.

Summing up, there remain $|\Fcal|=\left(m_Y^{m_X} - m_Y\right) / m_Y = m_Y^{m_X - 1} - 1$ functions to encode. 
\end{proof}

As for the second term in Eq. \eqref{eq:disc_x2y_two_part_code}, $L(z^n; M_{X\to Y}, f )$, the following statement holds:
\begin{thm}
\label{th:appro_nml_code_lenght}
We define 
$n(Y=f(X) + k')$ as the frequency of data in $z^n$ that satisfies $Y=f(X) + k'$.
Then, for any $f$, it holds that
\begin{align*}
&L(z^n ; M_{X\to Y}, f )  \\
  &=- \sum_{k=0}^{m_X - 1} n(X=k) \log \frac{n(X=k)}{n} 
 - \sum_{k'=0}^{m_Y - 1} n(Y=f(X) + k') \log \frac{n(Y=f(X) + k')}{n} \\
  &+ \log  \mathcal{C}_{\mathsf{CAT}}(K=m_X, n) + \log \mathcal{C}_{\mathsf{CAT}}(K=m_Y, n) .
\end{align*}
\end{thm}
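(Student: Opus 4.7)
The plan is to exploit the fact that with $f$ fixed, the conditional model $M_{X\to Y}|_f$ factorizes: since $Y = f(X)+E_Y$ with $E_X\Perp E_Y$, the pair $(X,E_Y)$ consists of two independent categorical random variables over $\Xcal$ and $\Ycal$. Correspondingly, the map $(X,Y)\mapsto(X,Z)$ with $Z=Y-f(X)\pmod{m_Y}$ is a bijection of $\Xcal\times\Ycal$, and carries $M_{X\to Y}|_f$ onto $\mathsf{CAT}^{m_X}\times\mathsf{CAT}^{m_Y}$ as sets of joint distributions. Once this correspondence is set up, the NML codelength inherits the independence-case formula already derived in Appendix~\ref{subsecB1}.

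Concretely, I would first compute the maximum likelihood. Under the fixed-$f$ parameterization, the MLE decouples into $\hat\theta^X_k=n(X=k)/n$ and $\hat\theta^{E_Y}_{k'}=n(Y=f(X)+k')/n$, where the latter count is exactly $n(Z=k')$ in the transformed coordinates. Substituting these estimates yields
\begin{align*}
-\log P(z^n; M_{X\to Y}, f, \hat\thetab) = \ell_X + \ell_{Y|X}(f),
\end{align*}
which matches the first two terms of the claim.

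Next, for the parametric complexity I would apply the same bijection at the sequence level. The map $(X^n,Y^n)\mapsto(X^n,Z^n)$ with $Z_i = Y_i - f(X_i)$ is a bijection of $\Xcal^n\times\Ycal^n$, and under it the maximized joint likelihood factorizes as $P(X^n;\hat\theta^X(X^n))\cdot P(Z^n;\hat\theta^{E_Y}(Z^n))$. Therefore the sum defining the parametric complexity splits into a product, giving
\begin{align*}
\mathcal{C}_n(M_{X\to Y}|_f)
&= \sum_{X^n}\max_{\theta^X} P(X^n;\theta^X)\cdot \sum_{Z^n}\max_{\theta^{E_Y}} P(Z^n;\theta^{E_Y})\\
&= \mathcal{C}_{\mathsf{CAT}}(m_X,n)\cdot \mathcal{C}_{\mathsf{CAT}}(m_Y,n),
\end{align*}
and taking the logarithm recovers the remaining two terms.

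The only subtle point is verifying that the maximized joint likelihood really does factorize in the $(X^n,Z^n)$ coordinates, i.e.\ that the MLE of $\theta^{E_Y}$ depends on $Y^n$ only through the residual counts $\{n(Z=k')\}_{k'}$. Since $f$ is fixed and the shift $k'\mapsto k'+f(k)$ is a permutation of $\Ycal$ for every $k$, this is automatic; no cancellations or degeneracies can occur. Thus the main obstacle dissolves, and what remains is essentially bookkeeping on top of the categorical NML calculation of Example~\ref{ex:Example_Discrete}.
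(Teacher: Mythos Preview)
Your proposal is correct and follows essentially the same approach as the paper: both arguments rest on the bijection $(X,Y)\mapsto(X,Y-f(X))$ carrying the fixed-$f$ model onto the product model $\mathsf{CAT}^{m_X}\times\mathsf{CAT}^{m_Y}$, from which the NML codelength inherits the independence-case formula. The paper phrases this as a one-line model equivalence $L(z^n;M_{X\to Y},f)=\Lcal^d(z'^n;M_{X\Perp Y})$ and then invokes the result of Appendix~\ref{subsecB1}, whereas you unpack the maximum likelihood and the parametric complexity separately; the content is the same.
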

\begin{proof} 
Let us denote the probability mass functions of $E_X$ and $E_Y$ by
$P(E_X; \bm{\pi}_{X})$ and $P(E_Y; \bm{\pi}_{Y})$, respectively, where $\bm{\pi}_{X}, \bm{\pi}_{Y}$ are the corresponding parameters.

Now, the observable pair $(X, Y)$ is one-to-one with the exogenous variable pair $(E_X, E_Y)$ when the function $f$ is fixed. 
Thus, under an appropriate transformation of data, the joint probability mass function of $(X, Y)$ is equivalent to that of $(E_X, E_Y)$,
\begin{align*}
    &P(X=x, Y=y; M_{X \to Y}, f, \bm{\pi}_{X}, \bm{\pi}_{Y}) \\
    &= P(E_X=x,E_Y=y-f(x); M_{X \to Y}, f, \bm{\pi}_{X}, \bm{\pi}_{Y}) \\
    &= P(E_X=x'; \bm{\pi}_{X}) P(E_Y=y'; \bm{\pi}_{Y}),
\end{align*}
which implies the equivalence of $M_{X \to Y}$ with fixed $f$ and $M_{X \Perp Y}$
\begin{align*}
    &P(X=x, Y=y; M_{X \to Y}, f, \bm{\pi}_{X}, \bm{\pi}_{Y}) \\
    &=P(X=x', Y=y'; M_{X \Perp Y}, \bm{\pi}_{X}, \bm{\pi}_{Y}),
\end{align*}
where $z'=(x', y')=(x, y-f(x))$ is the transformation of a datum $z=(x,y)$ with fixed $f$.

The equivalence in terms of the probability mass functions immediately extends to the equivalence in terms of the NML codelengths.
Particularly, the NML codelength of $M_{X\to Y}$ with fixed $f$ is the same as that of 
$M_{X\Perp Y}$ with the appropriate transformation,
\begin{align*}
    L(z^n ; M_{X\to Y}, f)  = \Lcal^{d}(z'^n; M_{X \Perp Y}),
\end{align*}
where $z'^n=(z'_1,...,z'_n)$ and $z'_i=(x_i, y_i-f(x_i))$ for all $1\le i \le n$.
This completes the proof.
\end{proof}

Therefore, the codelength of $z^n \in \mathcal{X}^n \times \mathcal{Y}^n = \lbrace 0, 1, \cdots m_X -1 \rbrace^n \times \lbrace 0, 1, \cdots m_Y - 1 \rbrace^n$ for the causal model $ M_{X \to Y}$ is calculated as:
\begin{align*}
& \Lcal^{d}(z^n ; M_{X \to Y}) \\ 
&= L(z^n ; M_{X\to Y}, \fhat ) + L(\fhat; M_{X\to Y})\\
&= - \sum_{k=0}^{m_X - 1} n(X=k) \log \frac{n(X=k)}{n}  
- \sum_{k'=0}^{m_Y - 1} n(Y=\fhat(X) + k') \log \frac{n(Y=\fhat(X) + k')}{n} \\
&\phantom{=}+ \log  \mathcal{C}_{\mathsf{CAT}}(K=m_X, n) + \log \mathcal{C}_{\mathsf{CAT}}(K=m_Y, n) + \log (m_Y^{m_X - 1} - 1),
\end{align*}
which is equal to $\mathcal{SC}(x^n; \mathsf{CAT}^{m_X}) + \mathcal{SC}((y - \fhat(x))^n; \mathsf{CAT}^{m_Y}) + L(\fhat; M_{X \to Y})$. The subtraction is taken over $\mathbb{Z}/m_Y\mathbb{Z}$.

\section{Proof of Proposition 2}\label{secC1}
In this section, we provide the proof for Proposition 2. 

We first derive the first term on the right-hand side of Eq. \eqref{eq:Lc_2stg_mXmY}, $L(m_X, m_Y; M)$, 
which is the codelength required to encode $(m_X, m_Y)$ under causal model $M$. By the Rissanen's integer coding (Eq. \eqref{eq:RissanenIntCode}) we have:
\begin{align}
    L(m_X, m_Y; M) = \log^{*}m_X + \log^{*}m_Y.
\end{align}

For the second term on the right-hand side of Eq. \eqref{eq:Lc_2stg_mXmY}, $L(z^n; M, m_X, m_Y)$, we calculate it under each causal model in the following sections, and we complete the proof of \eqref{eq:Lc_MmXmY} in Proposition \ref{prop2} for each causal relationship:

\subsection{Independent Case}\label{subsecC1}
 For given data $z^n=(x^n,y^n)$,  codelength $L(z^n; M_{X \Perp Y}, m_X, m_Y)$ is calculated by two NML codes with respect to the histogram models $\mathsf{HIS}^{m_X}$ and $\mathsf{HIS}^{m_Y}$ for $x^n$ and $y^n$, respectively. That is, we have:
\begin{align*}
    L(z^n; M_{X \Perp Y}, m_X, m_Y) = \mathcal{SC}(x^n; \mathsf{HIS}^{m_X}) +  \mathcal{SC}(y^n; \mathsf{HIS}^{m_X}).
\end{align*}
Then, the total codelength $\Lcal^{c}(z^n, m_X, m_Y; M_{X \Perp Y})$ is expressed as follows:
\begin{align*}
    \Lcal^{c}(z^n, m_X, m_Y; M_{X \Perp Y}) 
    &=L(m_X, m_Y; M_{X \Perp Y}) + L(z^n; M_{X \Perp Y}, m_X, m_Y) \\ 
    &= \log^{*}m_X + \log^{*}m_Y + \mathcal{SC}(x^n; \mathsf{HIS}^{m_X}) +  \mathcal{SC}(y^n; \mathsf{HIS}^{m_X}).
\end{align*}
By the results of Example \ref{ex:Example_Discrete} and \ref{ex:Example_Continuous}, the above is written as follows:
\begin{align*}
& \Lcal^{c}(z^n , m_X, m_Y; M_{X \Perp Y}) \\
&= \log^{*} m_X+ \log^{*} m_Y+ \mathcal{SC}(x^n; \mathsf{HIS}^{m_X}) + \mathcal{SC}(y^n; \mathsf{HIS}^{m_X})\\
&= -  \sum_{k=0}^{m_X-1} n(X \in I^X_k)\log \frac{n(X \in I^X_k)}{n} + \log \mathcal{C}_{\mathsf{CAT}}(K=m_X, n) - n \log m_X + \log^{*} m_X \\
&\phantom{=}-  \sum_{k'=0}^{m_Y-1} n(Y \in I^Y_{k'})\log \frac{n(Y \in I^Y_{k'})}{n} + \log \mathcal{C}_{\mathsf{CAT}}(K=m_Y, n) - n \log m_Y + \log^{*} m_Y \\
&=-  \sum_{k=0}^{m_X-1} n(\text{disc}(X;m_X) =k)\log \frac{n(\text{disc}(X;m_X) =k)}{n} + \log \mathcal{C}_{\mathsf{CAT}}(K=m_X, n) + L^{c \to d}(m_X, n)\\
&\phantom{=}-  \sum_{k'=0}^{m_Y-1} n(\text{disc}(Y;m_Y) =k')\log \frac{n(\text{disc}(Y;m_Y) =k')}{n} + \log \mathcal{C}_{\mathsf{CAT}}(K=m_Y, n) + L^{c \to d}(m_Y, n)\\
&= \Lcal^{d}(\text{disc}(x^n; m_X), \text{disc}(y^n; m_Y); \text{DISC}(M_{X \Perp Y}; m_X, m_Y)) + L^{c \to d}(m_X, n) + L^{c \to d}(m_Y, n).
\end{align*}
Thus, our claim holds in case $M=M_{X \Perp Y}$.

\subsection{Confounded Case}\label{subsecC2}
The codelength $L(z^n; M_{X \gets C \to Y}, m_X, m_Y)$ is calculated by NML code with respect to the histogram model $\mathsf{HIS}^{m_X, m_Y}$ for $z^n$. That is, we have: 
\begin{align*}
    L(z^n; M_{X \gets C \to Y}, m_X, m_Y) &= \mathcal{SC}(z^n; \mathsf{HIS}^{m_X, m_Y})\\
    &=-\max_{p\in \mathsf{HIS}^{m_X, m_Y}}\log p(z^n; \bm{\hat{\theta}}(z^n))+\log \mathcal{C}_n(\mathsf{HIS}^{m_X, m_Y}) 
\end{align*}
The maximum likelihood estimator for $\mathsf{HIS}^{m_X, m_Y}$ results in $\hat{\theta}_{k,k'}(z^n) = \frac{n(X \in I^{X}_k, Y \in I^{Y}_{k'})}{n}m_X m_Y$. Thus, the maximum log-likelihood of data is calculated as
\begin{align*}
    &\max_{p\in \mathsf{HIS}^{m_X, m_Y}} \log p(z^n; \bm{\hat{\theta}}(z^n))\\
    &= \sum_{k=0}^{m_X-1}\sum_{k'=0}^{m_Y-1} n(X \in I^{X}_k, Y \in I^{Y}_{k'}) \log \hat{\theta}_{k,k'}(z^n)\\
    &= \sum_{k=0}^{m_X-1} \sum_{k'=0}^{m_Y-1} n(X \in I^{X}_k, Y \in I^{Y}_{k'}) \log \frac{n(X \in I^{X}_k, Y \in I^{Y}_{k'})}{n} + n \log(m_X m_Y).
\end{align*}
The parametric complexity of $\mathsf{HIS}^{m_X, m_Y}$ is given by
\begin{align*}
     \log \mathcal{C}_n(\mathsf{HIS}^{m_X, m_Y}) 
    &= \log \int p (z^n; \bm{\hat{\theta}}(z^n)) dz^n \\
    &= \log \sum_{Z^n \in \lbrace 0, \ldots, m_X-1 \rbrace^n \times \lbrace 0, \ldots, m_Y-1 \rbrace^n} \int_{\Delta (Z^n)} p (z^n; \bm{\hat{\theta}}(z^n)) dz^n \\
    &= \log \sum_{Z^n \in \lbrace 0, \ldots, m_X-1 \rbrace^n \times \lbrace 0, \ldots, m_Y-1 \rbrace^n} \left( \frac{n(X \in I^{X}_k, Y \in I^{Y}_{k'})}{n} \right)^n \\
    &= \log \mathcal{C}_{\mathsf{CAT}}(K=m_X  m_Y, n) 
\end{align*}
Consequently, the NML codelength of $z^n$ for $\mathsf{HIS}^{m_X, m_Y}$ becomes
\begin{align}\label{SC_2dHIS}
    &\mathcal{SC}(z^n; \mathsf{HIS}^{m_X, m_Y}) \notag \\
    &= - \sum_{k=0}^{m_X-1} \sum_{k'=0}^{m_Y-1} n(X \in I^{X}_k, Y \in I^{Y}_{k'}) \log \frac{n(X \in I^{X}_k, Y \in I^{Y}_{k'})}{n} - n \log(m_X m_Y) \notag  \\
    &\phantom{=} +  \log \mathcal{C}_{\mathsf{CAT}}(K=m_X  m_Y, n).
\end{align}
By comparing the result with Example \ref{ex:Example_Discrete}, the total codelength $\Lcal^{c}(z^n, m_X, m_Y; M_{X \gets C \to Y})$ obtains the following representation:
\begin{align*}
& \Lcal^{c}(z^n , m_X, m_Y, M_{X \gets C \to Y}) \\
&= \log^{*} m_X+ \log^{*} m_Y+ \mathcal{SC}(z^n; \mathsf{HIS}^{m_X, m_Y}) \\
&= - \sum_{k=0}^{m_X - 1} \sum_{k'=0}^{m_Y - 1} n(X \in I^X_k, Y \in I^Y_{k'}) \log \frac{n(X \in I^X_k, Y \in  I^Y_{k'})}{n}  - n \log (m_X  m_Y) \\
&\phantom{=} + \log \mathcal{C}_{\mathsf{CAT}}(K=m_X  m_Y, n) + \log^{*}m_X + \log^{*}m_Y \\
&= - \sum_{k=0}^{m_X - 1} \sum_{k'=0}^{m_Y - 1} n(\text{disc}(X;m_X) =k,\text{disc}(Y;m_Y) =k') \log \frac{n(\text{disc}(X;m_X) =k,\text{disc}(Y;m_Y) =k')}{n} \\
&\phantom{=} + \log \mathcal{C}_{\mathsf{CAT}}(K=m_X  m_Y, n) + L^{c \to d}(m_X)+  L^{c \to d}(m_Y) \\
&= \Lcal^{d}(\text{disc}(x^n; m_X) , \text{disc}(y^n; m_Y); \text{DISC}(M^{m_X, m_Y}_{X \gets C \to Y})) + L^{c \to d}(m_X, n) + L^{c \to d}(m_Y, n).
\end{align*}
This completes the proof in case $M=M_{X \gets C \to Y}$.

\subsection{Direct Case}\label{subsecC3}
In this case, we also employ two-stage coding for $L(z^n; M_{X \to Y}, m_X, m_Y)$ with respect to function $f$. That is, for a given $(m_X, m_Y)$, we first estimate the optimal function $\hat{f}$ using maximum likelihood estimation (Algorithm \ref{alg:optimize_regression}), and then encode the data $z^n$ based on the histogram model $\mathsf{HIS}^{m_X, m_Y}_{X \to Y}$ with $\hat{f}$ fixed:
\begin{align*}
    L(z^n; M_{X \to Y}, m_X, m_Y) = L(\fhat; M_{X \to Y}, m_X, m_Y) + L(z^n; M_{X \to Y}, m_X, m_Y, \fhat),
\end{align*}
where the first term on the right-hand side is given by Eq. \eqref{eq:func_code} and the second one is calculated by encoding $x^n$ and $(y - \hat{f}(x))^n$ based on $\mathsf{HIS}^{m_X}$ and $\mathsf{HIS}^{m_Y}$, respectively.

Therefore, the codelength is formulated as follows:
\begin{align*}
    &\Lcal^{c}(z^n; m_X, m_Y; M_{X \to Y}) \\
    & = L(m_X, m_Y; M_{X \to Y}) + L(\fhat; M_{X \to Y}, m_X, m_Y) + L(z^n; M_{X \to Y}, m_X, m_Y, \fhat) \\
    &= \log^{*}m_X + \log^{*}m_Y + \log m_Y^{m_X - 1} + \mathcal{SC}(x^n; \mathsf{HIS}^{m_X}) +  \mathcal{SC}(y^n - \hat{f}(x^n); \mathsf{HIS}^{m_Y}),
\end{align*}

By the results of Example \ref{ex:Example_Discrete} and~Example~\ref{ex:Example_Continuous}, the total codelength in the above is expressed as follows:
\begin{align*}
& \Lcal^{c}(z^n ; m_X, m_Y; M_{X \to Y}) \\
&= \log^{*} m_X+ \log^{*} m_Y + \log m_Y^{m_X - 1} \\
&\phantom{=} - \sum_{k=0}^{m_X-1} n(X \in I^X_k)\log \frac{n(X \in I^X_k)}{n} + \log \mathcal{C}_{\mathsf{CAT}}(K=m_X, n) - n \log m_X \\
&\phantom{=} - \sum_{k'=0}^{m_Y-1} n(Y - f(X) \in I^Y_{k'})\log \frac{n(Y - f(X) \in I^Y_{k'})}{n} + \log \mathcal{C}_{\mathsf{CAT}}(K=m_Y, n) - n \log m_Y\\
&= - \sum_{k=0}^{m_X-1} n(\text{disc}(X;m_X) =k)\log \frac{n(\text{disc}(X;m_X) =k)}{n} + \log \mathcal{C}_{\mathsf{CAT}}(K=m_X, n)\\
&\phantom{=} - \sum_{k'=0}^{m_Y-1} n(\text{disc}(Y - \hat{f}(X);m_Y)=k')\log \frac{n(\text{disc}(Y - \hat{f}(X);m_Y)=k')}{n} + \log \mathcal{C}_{\mathsf{CAT}}(K=m_Y, n) \\
&\phantom{=}+\log m_Y^{m_X - 1}+ L^{c \to d}(m_X, n)+ L^{c \to d}(m_Y, n)\\
&= \Lcal^{d}(\text{disc}(x^n; m_X), \text{disc}(y^n; m_Y); \text{DISC}(M_{X \to Y}; m_X, m_Y)) + L^{c \to d}(m_X, n) + L^{c \to d}(m_Y, n).
\end{align*}

This completes the proof in case $M=M_{X \to Y}$.
The same argument holds for $M=M_{X \gets Y}$

\section{Description of T\"{u}bingen Benchmark Pairs}\label{secD1}
We provide the detailed descriptions of every dataset \cite{mooij2016distinguishing} we employed in Section \ref{subsubsecC3a}.

\subsubsection*{Discrete Case:}

\textbf{Traffic Dataset (No. 47)}:
This dataset focused on the relationship between the type of day and traffic volume. $X$ represents the number of cars counted per 24 hours at various stations in Oberschwaben, Germany. $Y$ is categorical, distinguishing between Sundays plus holidays (labelled as '1') and working days (labelled as '2'). The ground truth is $X \gets Y$, suggesting that the type of day influences the traffic volume.

\textbf{Internet Connections and Traffic Dataset (No. 68)}:
This dataset comes from a time series study focusing on internet connections and traffic at the MPI for Intelligent Systems. It features $X$ which represents the bytes sent at minute and $Y$ which denotes the number of open HTTP connections during that same minute. Measurements were taken every 20 minutes. The established ground truth is $Y \text{(open HTTP connections) }$ causes $X \text{(bytes sent)}$.

\textbf{Direction of Gabor Patches Dataset (No. 107)}:
This dataset originated from a psychophysics experiment involving human subjects and their perception of Gabor patches (stripe patterns used in psychological experiments) displayed on a screen. The Gabor patches were tilted either to the left or right, with varying contrast levels. $X$ represents the contrast values, ranging from 0.0150 to 0.0500, in increments of 0.0025, and $Y$ is binary, indicating whether the direction of the tilt was correctly identified or not. $X$ is regarded as the cause of $Y$.

\subsubsection*{Mixed Case:}
\textbf{Milk Protein Dataset (No. 85)}: The pair0085 dataset used in our experiments is a subset of the milk protein trial dataset by Verbyla and Cullis in 1990. The dataset contains weekly measurements of the assayed protein content of milk samples taken from 71 cows over a 14-week period. The cows were randomly allocated to one of three diets: barley, mixed barley-lupins, and lupins. The variables in the dataset are $X$, representing the time at which the weekly measurement was taken (ranging from 1 to 14), and $Y$, representing the protein content of the milk produced by each cow at time $X$. The ground truth for our experiments was set as $X \to Y$. Note that the dataset does not consider the effect of the diets on the protein content.

\textbf{Electricity Consumption Dataset (No. 95)}: This comprises 9,504 hourly measurements of total electricity consumption in MWh, denoted as $Y$, in a region of Turkey. The variable $X$ represents the hour of the day during which these measurements were taken. The ground truth for this dataset is set as $X \to Y$, suggesting that the hour of the day is the driving factor for electricity consumption.

\textbf{NLSchools Dataset (No. 99)}:
This dataset contains the information of 2287 Dutch eighth graders (about 11 years old) with features $X$(language test score) and $Y$(social-economic status of pupil's family). $X \to Y$ is regarded as the ground truth.

\subsubsection*{Continuous Case:}
\textbf{Cardiac Arrhythmia Database (No. 23)}:
The data, contributed in January 1998, includes 452 instances with two attributes: age and weight. Age was hypothesized to influence weight.

\textbf{Solar Radiation and Air Temperature Dataset (No. 77)}: This contains daily measurements of solar radiation in \textit{$W/m^2$} and the daily average temperature of the air in Furtwangen, Black Forest, Germany. The dataset covers a time period from January 1, 1985, to December 31, 2008, with a sample size of 8,401. Solar radiation is denoted by the variable $Y$, while the air temperature is denoted by the variable $X$. The ground truth for this dataset was set as $X \gets Y$, indicating that solar radiation is the cause of air temperature.

\textbf{Brightness of screen Dataset (No. 101)}:
This is from an experiment that was performed to generate samples that were clearly unconfounded. $X$ is grey value of a pixel randomly chosen from a fixed image. The grey value was displayed by the color of a square on a computer screen.
$Y$ is light intensity seen by a photo diode placed several centimeters away from the screen. $X$ was the cause of $Y$.

\end{appendices}


\end{document}